\documentclass[letterpaper, 10pt, journal]{IEEEtran}

\long\def \omitit#1{}
\usepackage{graphicx}
\usepackage{color}
\usepackage[linesnumbered]{algorithm2e}
\usepackage{amsmath} 
\usepackage{amssymb}  
\usepackage{amsthm}
\usepackage{multirow}
\usepackage{hhline}
\usepackage{epstopdf}
\usepackage{gensymb}

\pdfminorversion=4

\newtheorem{theorem}{Theorem}[section]

\newtheorem{lemma}[theorem]{Lemma}
\newtheorem{definition}{Definition}

\newcommand{\union}{\cup}

\title{\LARGE \bf
Integrated Task and Motion Planning for Multiple Robots under Path and Communication Uncertainties
}

\author{Bradley Woosley and Prithviraj Dasgupta
\thanks{*This work was partially supported as part of the COMRADES project supported by the U.S. Office of Naval Research and by a GRACA grant from the University of Nebraska Omaha}
\thanks{B. Woosley is a graduate student and P. Dasgupta is a Professor with the Computer Science Department,
        University of Nebraska, Omaha, USA.
        {\tt\small \{bwoosley, pdasgupta\}@unomaha.edu}}%
}

\begin{document}

\maketitle
\thispagestyle{empty}
\pagestyle{empty}

\omitit{
Outline: 
1. Introduction - explain what the problem is, why this problem is relevant to robots, which application domains of robotics will benefit by solving this problem, what is your solution approach and what advantage your solution approach will have over other existing solutions
2. Related Work - summarize related work on STAMP - include work by Kavraki's group, Loibl's paper, Marthi's paper (that used logic for the STAMP problem)
3. Model - Explain the mathematical framework - representation of the tasks as a connected graph and finding the best traversal of the graph by following the best policy using MDPs and HMMs, mention also about the lower level path planner that is used
3.1. Algorithms
4 Analytical Results (if any)
5. Experimental Results - compare with MRTA-RTPP, another algorithm possibly from one of Kavraki's papers
}

\begin{abstract}
We consider a problem called task ordering with path uncertainty (TOP-U) where multiple robots are provided with a set of task locations to visit in a bounded environment, but the length of the path between a pair of task locations is initially known only coarsely by the robots. The objective of the robots is to find the order of tasks that reduces the path length (or, energy expended) to visit the task locations in such a scenario. To solve this problem, we propose an abstraction called a task reachability graph (TRG) that integrates the task ordering with the path planning by the robots. The TRG is updated dynamically based on inter-task path costs calculated using a sampling-based motion planner, and, a Hidden Markov Model (HMM)-based technique that calculates the belief in the current path costs based on the environment perceived by the robot's sensors and task completion information received from other robots. We then describe a Markov Decision Process (MDP)-based algorithm that can select the paths that reduce the overall path length to visit the task locations and a coordination algorithm that resolves path conflicts between robots. We have shown analytically that our task selection algorithm finds the lowest cost path returned by the motion planner, and, that our proposed coordination algorithm is deadlock free. We have also evaluated our algorithm on simulated Corobot robots within different environments while varying the number of task locations, obstacle geometries and number of robots, as well as on physical Corobot robots. Our results show that the TRG-based approach can perform considerably better in planning and locomotion times, and number of re-plans, while traveling almost-similar distances as compared to a closest first, no uncertainty (CFNU) task selection algorithm.
\end{abstract}

\section{Introduction}
Multi-robot task planning and path planning are important problems in multi-robot systems when robots have to perform tasks at different locations within an environment. The problem is encountered in many applications of multi-robot systems such as automated surveillance~\cite{Zlot06}, robotic demining~\cite{Lenagh15}, and automated inspection of engineering structures~\cite{Rutishauser09}. As a motivating example, we consider a scenario for performing standoff detection of explosives or landmines using autonomous robots where multiple robots are provided with a coarse map containing locations of objects of interest. The robots are required to autonomously plan their paths to get in proximity of each object of interest so that they can analyse the object with their detection sensors. For realizing this, the main computational problem is to calculate a suitable task plan or ordering among the tasks for each robot so that a performance metric, such as the energy expended or the time required by robots to perform the tasks gets reduced. Researchers have proposed Multi-robot Task Allocation (MRTA) techniques ~\cite{Korsah13} as well as multi-robot path planning techniques~\cite{Wagner15} to address this problem. However, on one hand, most MRTA techniques assume that the costs or distances between the task locations are fixed and known to all the robots as soon as they become aware of the task. This criterion might not be valid if the robots have a coarse map of the environment and the path cost between tasks can change dynamically as the robots discover obstacles in the environment, or if due to communication constraints, the delivery of a task completed message is delayed. On the other hand, path planning techniques account for dynamically discovered obstacles but they focus mainly on finding collision and conflict-free paths for robots and do not adjust the ordering between the waypoints or task locations that are being visited by the robots. Keeping a fixed order between tasks might result in unnecessary longer paths to complete the task schedule, especially when a dynamically updated path around an obstacle could induce a shorter task schedule.

\begin{figure}
\includegraphics[width=3.5in]{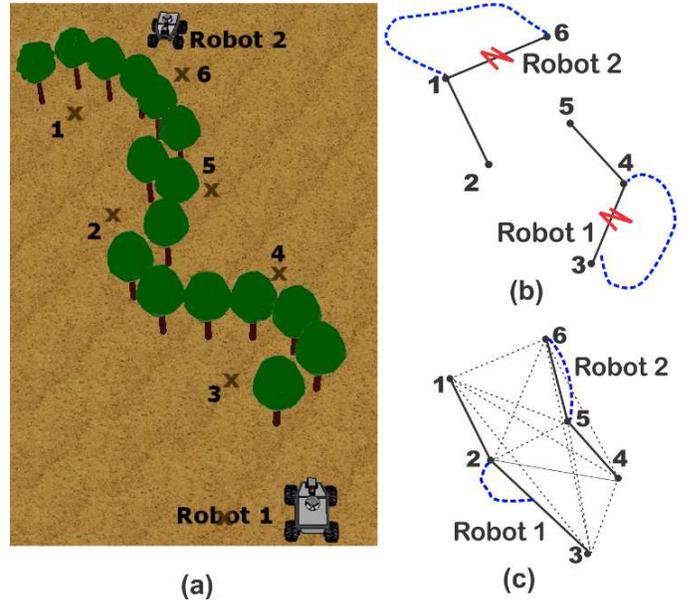}
\caption{{\small{(a) Scenario showing two robots and six tasks, each task needs $1$ robot to get completed, (b) Tasks selected by robots using CFNU algorithm; red-marked edges are unnavigable, blue dashed edges show re-calculated, collision-free paths, (c) Our proposed algorithm calculates a different task schedule for each robot that results in lower path costs by including uncertainty in path cost and availability in real-time into the task schedule calculation. Dotted edges show the task reachability graph (TRG) edges not followed by each robot due to higher expected costs; dashed edges are collision-free paths calculcated by the motion planner.}}}
\label{fig_topu}
\end{figure}

To address the above problem, it would make sense to investigate techniques that have a closer integration between the task and motion planning operations of robots. Researchers have proposed the Simultaneous Task and Motion Planning (STAMP) problem to investigate this problem in the context of robotic manipulation~\cite{Sucan-ICRA-2012}. Our work advances this direction of research by proposing a framework called Task Ordering under Path Uncertainty (TOP-U) to address the STAMP problem in the context of a search and exploration scenario using wheeled, mobile robots. An example scenario shown in Figure~\ref{fig_topu} illustrates an example of the reduced time taken and distance traveled when  robots use our proposed algorithm, in contrast with Closest First No Uncertainty (CFNU) algorithm used for task selection, that does not consider uncertainty in path costs and availabilities in its calculations. The main contributions of our paper are the following: We present a formalization of the problem called task ordering with path uncertainty (TOP-U), where multiple robots have to visits tasks whose locations are uncertain due to the presence of obstacles in the environment, while reducing the distances traveled between the tasks. We propose a data structure called a task reachability graph (TRG) that is used to model the problem and a Markov Decision Process (MDP)-based algorithm that each robot uses to dynamically calculate its task schedule in real-time using the TRG. We also propose a distributed coordination algorithm for resolving deadlock scenarios due to path conflicts between multiple robots using our algorithm. We have proved analytically that our task scheduling algorithm is optimal and the coordination algorithm is deadlock free. We also provide extensive experimental results on simulated and physical Coroware Corbot robots, with different number of robots and tasks within environments with different obstacles geometries and task distributions. Our results show that our proposed TRG-based approach could perform up to $51\%$ better in planning and locomotion times with $20\%$ fewer replans, while traveling similar distances as compared to a closest first, no uncertainty (CFNU) selection algorithm. The rest of our paper is structured as follows: in the next section, we discuss existing research on MRTA and motion planning techniques. In Section~\ref{sec_topu}, we formalize the TOP-U problem and describe the robot task scheduling and multi-robot coordination algorithms. Sections~\ref{sec_proofs} and~\ref{sec_expts} describe our analytical and experimental results and finally we conclude.

\section{Related Work}
\label{sec_relatedwork}

Motion and task planning have been important problems in robotics. Several approaches for solving them have been proposed in literature over the past two decades, although these problems have largely been treated separately. In motion planning the objective is to find a collision free path for a robot so that it can navigate within its environment~\cite{Choset05}. Sampling-based motion planners like probabilistic roadmap (PRM)~\cite{Bohlin-ICRA-2000} and Rapidly-exploring Random Trees (RRT)~\cite{Lavalle98} have been used widely for motion planning. Recently, researchers have proposed extensions to these techniques by using methods to reduce the time required to calculate the paths and address the problem of moving through narrow passages~\cite{Gammell14,Hauser15,Voss15}, and handling uncertainty in obstacle locations~\cite{Kneebone09,Missiuro-ICRA-2006}. Researchers have also investigated the problem of coordinating the paths between multiple robots~\cite{Desaraju12,Wagner15} where robots exchange their individual paths with each other and mutually exclusive paths are calculated for each robot in the robots' joint configuration space. To reduce the complexity of planning in the joint configuration space, a lightweight protocol was proposed in ~\cite{Luna11} where robots iteratively make way for one robot at a time to reach its goal until a consistent set of maneuvers have been determined for all robots to reach their goal. We use a complementary approach in this paper that allows robots to mostly calculate path plans individually in their local configuration space but if a set of robots get within close proximity of each other they use a conflict resolution algorithm to find collision free paths. Path planning in dynamic environments where the cost between the source and goal locations can change abruptly was addressed in~\cite{Marthi-RSS-2012, Loibl-ICRA-2013} using Markov Decision Processes (MDPs). In our proposed approach, path costs are also dynamically updated using the robots' sensor information, and the updated path costs are used immediately to recalculate the task schedule using an MDP to allow for switching between tasks to reduce the cost of the total path length to visit all the tasks.

The problem of finding a suitable ordering of operations or tasks to perform by multiple robots has been researched as the Multi-Robot Task Allocation (MRTA) problem~\cite{Gerkey-IJRR-2004}, excellent reviews of MRTA are available in~\cite{Dias06,Korsah13}. Most of the approaches focus mainly on finding a suitable ordering of tasks while assuming appropriate robot motion planning techniques. Recently, researchers have addressed more tightly coupled task and motion planning under the simultaneous task and motion planning (STAMP) problem. The proposed solution techniques  combine symbolic task planning with control based techniques~\cite{Kaelbling13, Sucan-ICRA-2012, Wolfe-ICAPS-2010} for a mobile manipulation problem where task interdependencies form a critical aspect and reasoning using symbolic task planning is critical to determine the task precedence. In contrast, for our setting, reducing the cost of the task schedule is more critical than the order of tasks, and our algorithm uses probabilistic methods to quickly incorporate robots' perceptions about the environment into its plan. 

\omitit{
In~\cite{Sung-ICRA-13}, the authors have proposed a task switching mechanism for multiple robots - a map of the environment including obstacles is available with each robot and robots exchange their initial locations and task locations with each other. Robots then use an A* search to determine least-cost, collision-free paths in their joint configuration space and exchange task queues with each other to determine the best schedule of visiting tasks. In contrast, in our work, we assume that path costs between tasks can change dynamically and robots determine their current path based on the perceived reachability between tasks. For example, in~\cite{Korein14}, the authorslooked at methods for scheduling two types of tasks, user requests and exploration tasks, without adversly affecting the completion of the user requests.  In contrast, our work only considers one type of task, and once a task becomes available, it stays  available until the robots have compleated it. Our approach also tries to minimize the distance that the robot traveles, where as their approach tries to maximize a reward based on the scheduling of tasks within their allocated time window. Recently researchers have started investigating the problem of adapting robot's task plan based on information from the environment gathered by the robot while navigating. Our earlier work~\cite{Woosley-FLAIRS-2013} also explored the problem of simultaneous task and motion planning and proposed an algorithm called MRTA-RTPP that used a CFNU approach to update the task schedule and did not incorporate uncertainty in the path costs due to noisy sensor readings and other robots' motions.
}

\section{Task Ordering with Path Uncertainty}
\label{sec_topu}
We consider a set of wheeled, mobile robots, $R$, deployed within an environment. Robots are capable of localizing themselves within the environment and can also communicate wirelessly with each other. The environment contains a set of tasks, $T$. Robots have to visit the locations of tasks to perform operations required to complete the tasks. Each task can require visits by one of more robots to get completed; the information about how many robots are required to complete a task is provided {\em a priori} to the robots. We consider tasks that are loosely coupled and all robots required to complete a task do not necessarily need to visit the task's location at the same time. Each robot is initially aware of the locations of the tasks, but does not know the exact paths between the tasks{\footnote{In the rest of the paper, we have referred to task locations as tasks for legibility.}} or the obstacles along those paths. To represent this path uncertainty, each robot uses a task reachability graph (TRG), a fully connected graph with task locations as its vertices. Formally, a TRG is denoted by $TRG=(V, E, C, P, t)$ where:

\begin{itemize}
\item $V^{(t)}=\{v_i^{(t)} \cup v_{curr}\}$ is the vertex set and $v_{curr}$ is the robot's current location. Each $v_i^{(t)}$ corresponds to a task location the robot is aware of at time $t$ 
\item $E^{(t)}=\{e_{ij}^{(t)}: e_{ij}^{(t)} =(v_i^{(t)}, v_j^{(t)})\}$ is the edge set connecting the vertices in the TRG
\item $C^{(t)}=\{c_{ij}^{(t)}\}$ is the expected distance or cost expended by a robot to traverse the path underlying edge $e_{ij}$. $c_{curr,i}$  denotes the expected cost from the robot's current location ($v_{curr}$) to $v_i$
\item $P^{(t)}=\{p_{ij}^{(t)}\}$ is the probability that edge $e_{ij}^{(t)}$ is not available
\end{itemize}

Owing to path uncertainties between task locations (TRG vertices), $C^{(t)}$ and $P^{(t)}$ are estimated from perceived sensor data and they get updated by the robot as it discovers obstacles and task availabilities while navigating between tasks. Let $S: V \rightarrow V$ denote a function that returns an ordering over the set of tasks. Each robot maintains its own TRG and plans its path using its TRG. The problem facing each robot to find a suitable order for visiting the tasks is specified by the Task Ordering under Path Uncertainty (TOP-U) problem below:

{\bf TOP-U Problem.} Given $TRG=(V, E, C, P, t)$ representing the set of tasks, inter-task costs and task availabilities at time $t$, determine a schedule $S^*(V)^{(t)}$ that induces an ordering $(v^1, v^2, v^3...)$ over the tasks, given by:
\[ S^*(V)^{(t)} = \underset{S(V)^{(t)}}{\arg\min} \displaystyle{\sum_{(v_i,v_j)} \in S(V)^{(t)}} (1-p_{ij}^{(t)}) c_{ij}^{(t)}\] 
subject to:
\begin{eqnarray}
0 \leq p_{ij}^{(t)} \leq 1 \nonumber \\
\displaystyle{\sum_{(v_i,v_j) \in S(V)^{(t)}} (1-p_{ij}^{(t)})c_{ij}^{(t)} \le {\cal B}^{(t)}} 
\label{eq:topu}
\end{eqnarray}

where ${\cal B}^{(t)}$ is the battery available to a robot at current time $t$. Note that $S^*(V)^{(t)}$ represents the path through the TRG with the minimum expected cost, weighted with availability. The second constraint above ensures that the robot is able to complete this path with its currently available battery. Note that $\{V\}$ can change dynamically for a robot as tasks can get completed by other robots. An instance of the TOP-U problem corresponds to the well-known traveling salesman problem (TSP)~\cite{Cormen09}. However, solving the conventional TSP might not guarantee an optimal path as edge costs ($c_{ij}$) could change dynamically as robots discover previously unknown obstacles while traveling between tasks, while edge availabilities ($p_{ij}$) could change dynamically because some tasks got completed by other robots. To address the dynamic nature of the problem, we propose a Hidden Markov Model (HMM)-based method to update the edge availabilities, and then use the updated information within an MDP to find the desired ordering of the TRG vertices to solve the TOP-U problem. In the rest of the paper, for legibility, we have omitted the time notation from the TRG parameters, assuming it to be understood from context.

\subsection{Dynamically Updating Edge Cost and Availability}
\label{sec:edgecost}
{\bf Edge Cost Update.} TRG edge costs correspond to the distance that the robot requires to travel to reach from one TRG vertex to another. Each edge cost is initialized to the Euclidean distance between the pair of TRG vertices forming the edge. However, if there exist previously unknown obstacles in the path between a pair of TRG vertices, then the distance the robot travels might exceed the Euclidean distance between the vertices. To accommodate the path distance uncertainty, the robot uses a probabilistic roadmap(PRM)-based path planner~\cite{Missiuro-ICRA-2006} to dynamically update the expected edge cost. The PRM planner works by first generating a set of sampled points ${\cal R}$ from the robot's configuration space. It then uses the available information about obstacles perceived by the robot from its current location $v_{curr}$ to determine path segments that are close to obstacles and might result in collision with high probability; such segments are associated with a high penalty value. Following~\cite{Missiuro-ICRA-2006}, the TRG edge cost calculations are given by the following steps.

\begin{enumerate}
\item Calculate the cost of each path segment that connects any two sampled points $(\rho_1, \rho_2) \in Q_{free} \subseteq \Re^2$ ($Q_{free}$ is the free space in the environment) as:
\begin{eqnarray}
cost{\rho_1,\rho_2} =  p^{coll}_{\rho_1, \rho_2}penalty + (1-p^{coll}_{\rho_1, \rho_2})dist(\rho_1, \rho_2), \nonumber 
\end{eqnarray}
where $p^{coll}_{\rho_1, \rho_2}$ the probability of collision of segment $(\rho_1, \rho_2)$ based on its distance to perceived obstacles, $penalty$ is an arbitrary large number used to discourage paths that have a high probability of collision and $dist(\rho_1,\rho_2)$ is the Euclidean distance between $\rho_1$ and $\rho_2$
\item Calculate the physical path $\rho_{ij}$ corresponding to TRG edge $e_{ij}=(v_i, v_j)$ as a sequence of path segments $\rho = (\rho_1, \rho_2)_{start}...(\rho_1,\rho_2)_{end}$, given by:
\begin{eqnarray}
\rho_{ij} = \underset{\rho}{\arg \min} \sum_{(\rho_1, \rho_2) \in \rho} cost{\rho_1,\rho_2} \nonumber\\
\mbox{s. t.:}\quad \rho_{1_{start}} = v_i, \rho_{2_{end}} = v_j \nonumber
\end{eqnarray}
\item Calculate the expected cost $c_{ij}$ for TRG edge $e_{ij}$ as the sum of the costs of path segments in path $\rho_{ij}$ calculated in step $2$ above, as: 
\begin{equation}
c_{ij} =  \sum_{(\rho_1, \rho_2) \in \rho_{ij}} cost{\rho_1,\rho_2}
\label{eq:prm-cost}
\end{equation}

\end{enumerate}

\begin{figure}
\centering
\includegraphics[width=\linewidth]{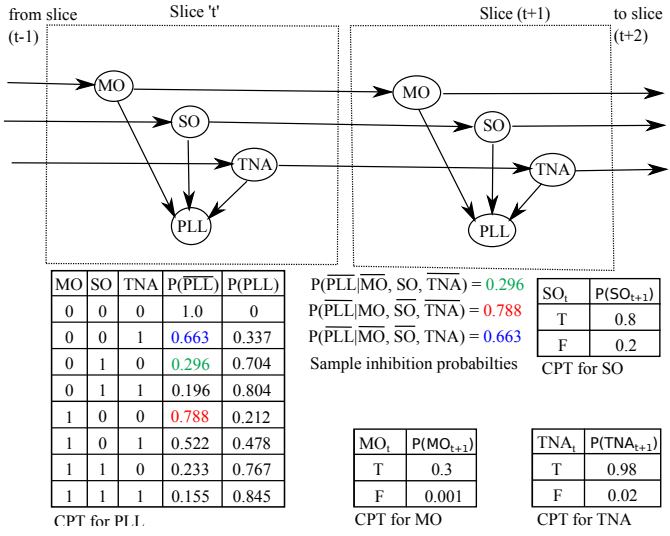}
\caption{Temporal Bayesian network used in the HMM for determining the suitability of path length to a task.}
\label{fig:dbn}
\end{figure}

{\bf Edge Availability Update.} In our scenario, tasks are completed in a distributed manner by different robots and a task (or, TRG vertex for a robot) might get completed by other robots before the robot reaches it. When a task is completed, the last robot visiting the task broadcasts a task completion message to all other robots. Each robot then needs to remove the TRG vertex for the task from its TRG. Because message communication in unstructured environments might be unreliable, task completion message might be lost due to noisy communication, or, the robot broadcasting the message might be outside the communication range of some robots. To handle this uncertainty, it would be useful if the robot could infer whether the task was still available or not, from information related to the task's availability that it can directly observe. For our problem, we assume that this observable information related to task availability is the distance or path length remaining to reach the task - if the path length is very large, it could be due to the task becoming unavailable{\footnote{Note that when a TRG edge is removed, it can be looked upon as the edge length becoming infinitely large.}}. One caveat to using the path length as an indicator of task availability is that it is also affected by obstacles along the path; it changes dynamically as the robot encounters obstacles while going towards the task. The problem facing the robot then is to observe the path length values over the recent past and infer from it whether the task is still available. 

To model this inference problem, each robot uses a Hidden Markov Model (HMM)~\cite{Russell09}; one HMM, $HMM_{ij}$, is used to update the availability of each TRG edge $e_{ij}$. The crucial HMM variable is Path Length Long ($PLL$) that evolves temporally as the robot encounters static obstacles ($SO$) and mobile obstacles ($MO$) on its range sensor, or, receives a task completion message called task not available ($TNA$), as shown in Figure \ref{fig:dbn}.  Variables $SO, MO$ and $TNA$ are binary-valued and they too evolve with time as the robot moves towards the task and encounters obstacles, or receives task completion message. The temporal transition model is given in Figure~\ref{fig:dbn} via the arrows moving between dashed boxes, the sample temporal probabilities of these variables are also provided. The dependencies between these variables affecting $PLL$ are captured in each slice of the HMM as shown inside the dashed boxes in Figure~\ref{fig:dbn}. Because each of the variables affecting $PLL$ - static obstacles, mobile obstacles and task not available - do not affect each other and can be considered as independent of each other, their probabilistic effect on $PLL$ can be combined relatively easily from the individual inhibition probabilities for these variables using a noisy-OR model. An example noisy-OR based probability calculation for $PLL$ is shown alongside Figure \ref{fig:dbn}.  We assume that the environment has a $98\%$ communication success rate, or $2\%$ communication failure rate, leading to the probability value given in Figure~\ref{fig:dbn}.

To solve the problem of calculating the probability of a task being still available from $PLL$ values, the robot first calculates the observed value of $PLL$ variable for the current time step. For $HMM_{ij}$, the observed value of variable $PLL_{ij}$ for current time step $t$ is determined by assuming that $PLL$ is very large when it is $\Gamma$ times more than the minimum cost of any edge in the current TRG, as given by the equation below:
\begin{equation}
PLL_{ij}^{(t)} = \begin{cases} 
      \mbox{FALSE} & \mbox{if}\,\, c_{ij}^{(t)} \leq \Gamma \min(\{c_{ik}^{(t)}:\forall k \in V \}) \\ 
      \mbox{TRUE} & otherwise \\ 
   \end{cases}
\label{eq:pll}
\end{equation}
where $\Gamma$ is a user defined constant that is based on system and environment factors such as battery remaining, terrain and navigation conditions. The sequence of values for $PLL_{ij}^{(1...t)}$ is recorded, and used to estimate the probability of the state variable $TNA_{ij}^{(t)}$, given by $p(TNA_{ij}^{(t)}|PLL_{ij}^{\{1:t\}})$, using the  Forward-Backward algorithm \cite{Russell09}.  The forward stage is given by the equation:

\begin{multline}
P(X^{(t+k+1)} \vert PLL_{ij}^{\{1:t\}}) = \sum_{TNA_{ij}^{(t+k)}} \bigg(P(X^{\{t+k+1\}} \vert TNA_{ij}^{(t+k)} ) \\
\quad \quad \quad \quad   P(TNA_{ij}^{(t+k)} \vert PLL_{ij}^{\{1:t\}}) \bigg) \nonumber
\end{multline}

and the backwards stage is given by:

\begin{equation}
\begin{split}
P(PLL_{ij}^{\{k+1:t\}} \vert X^{(k)}) = \sum_{TNA_{ij}^{(k+1)}} \bigg( P(PLL_{ij}^{(k+1)}) | TNA_{ij}^{(k+1)}) \\
\quad \quad \quad \quad P(PLL_{ij}^{\{k+2:t\}} \vert TNA_{ij}^{(k+1)})P(TNA_{ij}^{(k+1)} | X^{(k)})\bigg) \nonumber
\end{split}
\end{equation}

where $X^{(t)}$ is the combination of the set of state variables, $MO, SO$ and $TNA$ at time $t$, $PLL_{ij}^{\{1:t\}}$ is set of evidence (PLL observations) from time $1$ through $t$, and $TNA_{ij}^{(t)}$ is the value of the variable $TNA_{ij}$ at time $t$. Finally, to integrate the calculated value of $TNA_{ij}$ with the TRG edge $e_{ij}$, we model the task availability as probabilistic availability $p_{ij}$ of  TRG edge $e_{ij}$. $p_{i,j}$ gets a value $1$ when the robot is certain that the task is available and there exists a finite distance path to reach it, and, $0$ when the path to reach the task is infinitely large meaning the task is not available; intermediate probabilities represent the uncertainty of the task not being completed by other robots and still remaining available to the robot. $p_{ij}$ is calculated by normalizing Equation \ref{eq:pll}, given by: 

\begin{equation}
p_{ij} = \frac{p(TNA_{ij}|PLL^{\{1:t\}})}{\sum_j p(TNA_{ij}|PLL^{\{1:t\}})}.
\label{eq:hmm-pij}
\end{equation}
The normalization ensures that the robot has a probability $1$ of leaving TRG vertex $v_i$ through at least one of its incident edges.

\omitit{
\begin{figure}
\begin{center}
\includegraphics[scale=.8]{figs/hmm-tbn.eps}
\caption{{\small{Temporal element of Bayesian network}}}
\label{fig_hmm_temporal}
\end{center}
\end{figure}
}

\begin{figure}
  \centering
  \includegraphics[width=\linewidth]{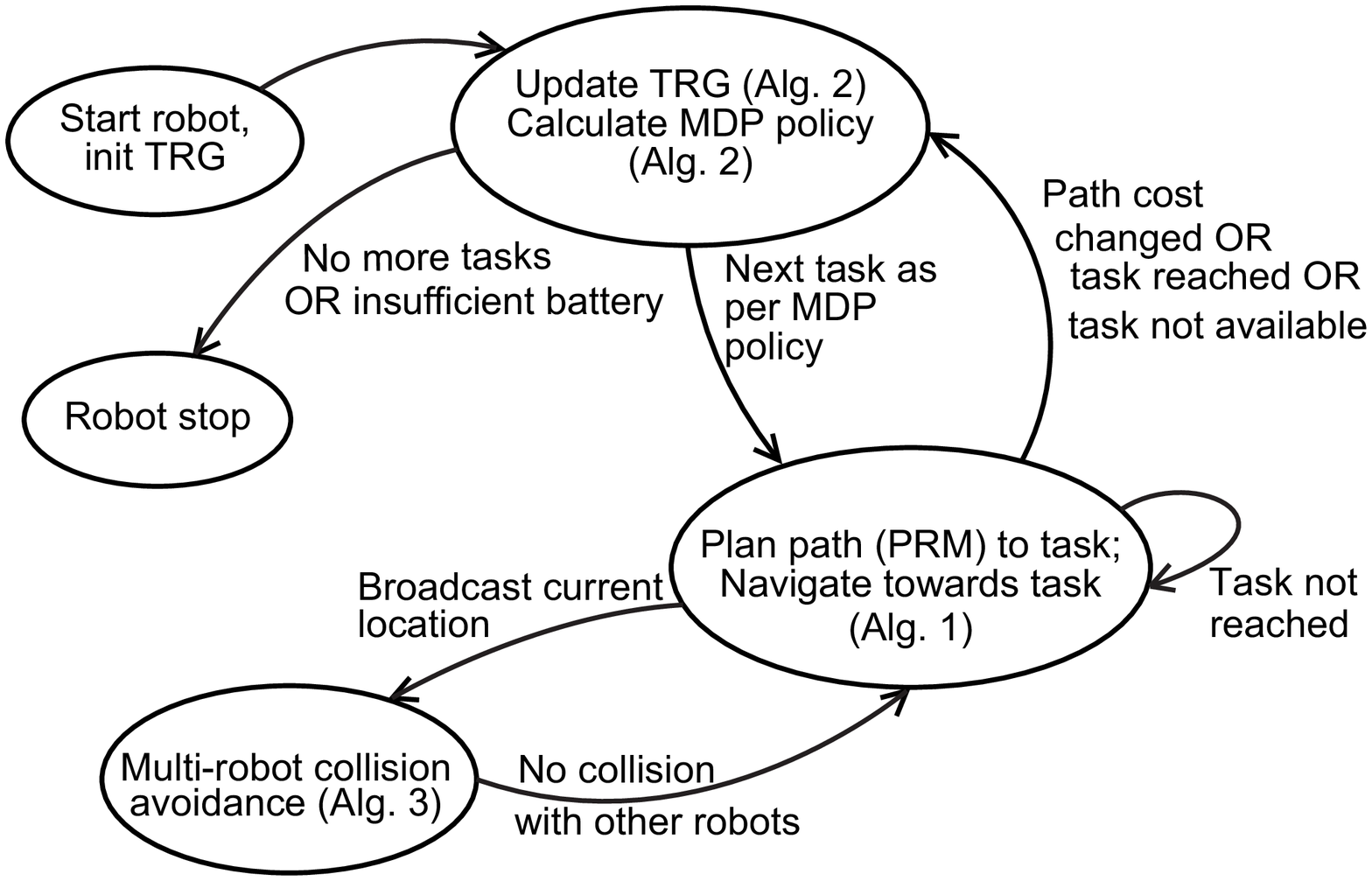}
  \caption{State diagram showing the operation of a robot using the different algorithms proposed.}
\end{figure}

\begin{algorithm}
{\textbf {TRGTaskSelect}}($TRG=<V, E, P, C>$) \\
\KwIn{$TRG$: task reachability graph}
Build initial PRM roadmap \\
Initialize MDP with current TRG information \\
Determine paths in robots configuration space using PRM planner
between all TRG edges $e_{ij}=(v_i,v_j) \in E$\\
$v_{curr} \leftarrow$ current position of robot\\
\While{$V$ is not empty AND battery available for next vertex}{
	$v' \leftarrow$ Next task as per MDP policy\\
	$path \leftarrow$ PRM path between $v_{curr}$ and $v'$\\
	\While{$v'$ not reached}{
		Update $v_{curr}$ using localization system \\ 
		Broadcast $v_{curr}$ to other robots\\
		//avoid collisions w. other robots (Alg.~\ref{algo_collision_avoidance})\\
		$collision \leftarrow$ coordinatePath($TRG, v'$) \\
                \If{$collision = FALSE$}{
                  \If{(taskCompleted message recd. from anothe robot) OR (new obstacle detected in robot's path)}{
		    $(v', path) \leftarrow$ updateTRG($TRG, v'$) (Alg.~\ref{algo_updateTRG})\\
                  }
		  Move along current segment of $path$
                }
	}
	Remove $v'$ from $V$ //reached $v'$ \\
    Communicate completion of task to all other robots 
}
\caption[Select Task]{{\small{Algorithm to select a task in the TRG using an MDP-based policy.}}}
\label{algo_task_select}
\end{algorithm}

\begin{algorithm}
{\textbf {updateTRG}}($TRG=<V, E, P, C>, v'$) \\
\KwIn{$TRG$: task reachability graph; $v'$: destination TRG vertex}
\KwOut{$v$: destination TRG vertex, $path$: path to destination TRG vertex}		
	update $V$ removing completed tasks, if any\\
        \For{ $(v_i, v_j) \in E$ }{
          $path' \leftarrow$ replan path from $v_i$ to $v_j$ (PRM)\\
          $c_{ij} \leftarrow $pathLength($path'$) \\
        }
	$path \leftarrow$ replan path from $v_{curr}$ to $v'$ using PRM-planner\\
        Generate observations $PLL_{ij}$ for every $e_{ij}$ in TRG\\
        Update $p_{ij}$ using HMM in Eqn. \ref{eq:hmm-pij} for every $e_{ij}$ \\
	Update MDP, TRG with new values of $V$, $p_{ij}$, $c_{ij}\,$ for every $e_{ij}$\\
	$v_{new} \leftarrow$ Next task as per updated MDP policy (Egn. \ref{eq:mdp})\\ 
	\If {$v_{new} = \{\emptyset\}$}{
		return null; // No more tasks 
	}
	\If { $v_{new} \not = v'$ } {
		$v' \leftarrow v_{new}$; // Switch tasks\\
		$path \leftarrow$ PRM path between $v_{curr}$ and $v'$
	}
return $v', path$
	
\caption[Update TRG]{{\small{Algorithm to update TRG and path when TRG vertices
are removed (task completed) or a new obstacle is detected that
triggers a path re-calculation.}}}
\label{algo_updateTRG}
\end{algorithm}

\subsection{TOP-U Solution using Markov Decision Process}

Following the update of the edge costs and availabilities, the robot has to select the TRG edge with the minimum expected cost, weighted with availability to solve the TOP-U problem (Equation \ref{eq:topu}). Because of the uncertainties in edge costs and availabilities, a Markov Decision Process(MDP) is used to do this. An MDP~\cite{Russell09} consists of a set of states, a set of actions to transition between states, along with a probability distribution and reward for each action at each state. The output of an MDP is a policy that prescribes an action at each state, which maximizes the cumulative, expected reward to the robot to reach a desired or goal state from its current state. A more thorough discussion on MDPs and solution techniques is given in ~\cite{Russell09}. For our TOP-U problem, the TRG's vertices, $V$, represent the MDP's states, the set of actions at each state (TRG vertex) of the MDP correspond to the edges from that TRG vertex, TRG edge availabilities give the transition probabilities between MDP states, while the inverse of the TRG edge costs correspond to the reward for reaching each state in the MDP (lower edge costs corresponds to larger rewards). The policy calculated by the MDP gives the maximum expected reward (or minimum expected cost, weighted by availability) for the robot to visit the TRG vertices. The MDP is solved using the value iteration algorithm, that solves the following equation: 

\begin{equation}
U(v_i) = c_{curr,i}^{-1} + \gamma \max_{e_{ij} \in E} \sum_{v_k} P(v_k \vert v_i, e_{ij}) U(v_k)
\label{eq:mdp}
\end{equation}

where $c^{-1}_{curr,i}$ is the inverse expected cost from the robots current location to $v_i$, $\gamma$ is a user-defined, reward-discount factor and $P(v_k \vert v_i, e_{ij})$ is the probability that the robot will reach task $v_k$ when starting at task $v_i$ and attempting to follow the edge $e_{ij}$ towards task $v_j$ which may or may not be the same task as $v_k$. $v_k \not = v_j$ happens if the robot was to attempt going to task $v_j$, but due to obstacles, communications, etc. it determines that it is better to instead go to task $v_k$.  The equation for $P(v_k \vert v_i, e_{ij})$ is given below, which if $v_k = v_j$, the probability is the edge availability, and if $v_k \not = v_j$, then it is the probability of the edge not being available distributed evenly to the remaining tasks.

\begin{equation}
P(v_k \vert v_i, e_{ij}) = \begin{cases} 
1-p_{ij} & \text{if $j=k$} \\
\frac{p_{ij}}{\vert V \vert - 1} & \text{otherwise}
\end{cases}
\label{eq:mdp-prob}
\end{equation}

\subsection{Robot Navigation and Multi-robot Path Coordination Algorithms}

The main algorithm used by a robot for selecting tasks to visit is shown in Algorithm \ref{algo_task_select}. The main idea of the algorithm is to select the task, $v'$, determined by the MDP policy, and plan a path to reach it. If the path results in potential collisions with other robots' paths, path conflicts are resolved (line $11$). Every time the path cost to a task changes due to obstacles, or a task completed message from another robot is received, a TRG update is triggered (line $14$). This might result in switching the task the robot is headed to. The robot continues to move towards its currently selected task until it is reached and upon reaching the task its removes its vertex from the TRG and broadcasts task complete message to other robots (lines $16-20$).

The algorithm used to update the TRG is shown in Algorithm \ref{algo_updateTRG}. When a robot's TRG vertex set or path costs on the TRG change, it calculates a new navigation path to its destination vertex $v'$ (lines $2-7$) and new edge availability values using its current perception in the HMM (line $8-9$). These updated values are incorporated into the MDP and the MDP's policy is recalculated to yield the new destination vertex (line $10-11$). If the recalculated policy prescribes a new target vertex, $v_{new}$then the robot performs a task switch and its destination vertex is changed from $v'$ to $v_{new}$ (line $15-17$). The algorithm also handles the case where all tasks in a robot's TRG might get completed by other robots before it reached those tasks; in that case the algorithm returns a null vertex and empty path(lines $12-13$) so that the robot stops.

\begin{algorithm}[htb!]
  \caption[Collision Avoidance 3]{{\small{Modified Algorithm to avoid collisions between robots in close proximity of each other.}}}
  \label{algo_collision_avoidance}
        {\textbf {coordinatePath}} \\
        \KwIn{$v'$: destination TRG vertex; $TRG$: task reachability graph}
        \KwOut{Is the robot currently in a collision}
        \If {another robot within $O_i$}{
          stop \\
          build/update collision shape \\
          \If {previous winner token released}{
            prio $\leftarrow$ robot id \label{line:StoW} \\
          }
	  //priority is either robot id or $\infty$\\ 
	  send/receive priority to/from other robots within $O_i$\\
          select robot with lowest prio in $O_i$ as winner \label{line:WtoL}  \\
          \If{I was winner, but lost this round}{
            transfer winner token to winning robot \label{line:LtoW}  \\
          }
	  \If {I am winner}{
            $(v', path) \leftarrow$ updateTRG($TRG,v'$)\\
	    //other robots considered as static obstacles in PRM\\
	    \If{$v' = null$}{ 
              //No more valid paths available to the robot\\
	      prio $\leftarrow \infty$ \label{line:LtoS} \\
              }
	    \Else {
	      Move along current segment of $path$\\
	    }
            \If{outside $O_i$}{
	      release winner token \label{line:LtoX} \\	
            }
          }
	  \Else {
            \If{All priorities in collision shape are $\infty$}{ \label{line:jointPlanning} 
              $path \leftarrow$ performJointPlanning(TRG)\\
              \If{$path = null$}{
                exit FAILURE
              }
              Set all robots prio to robot id \label{line:jointPathFound}\\
              return TRUE
            }
	  }
          return FALSE\\
        }
        return FALSE\\
\end{algorithm}

\subsection{Coordinating paths between robots to avoid collisions} If robots determine their paths individually using Algorithm~\ref{algo_task_select}, it could lead to robot collisions when the planned paths of two or more robots intersect with each other. To avoid this scenario, we have used a collision avoidance algorithm shown in Algorithm~\ref{algo_collision_avoidance}. Each robot uses the locations broadcast by other robots to check if there are other robots within a radius of $r_{coll}$, called the collision circle, of itself (lines $2$). When a set of robots are within the collision circle of each other, all the robots stop and the robots exchange their identifiers, representing their priorities, with each other. A leader election algorithm called the bully algorithm~\cite{bully82} is then used to select the robot with the highest priority as the winner. The winner robot holds the winner token, which gives it the right to move (lines $3-7$). All other robots in the collision circle, which do not hold the winner token, remain stationary (line $24-26$). The winner robot uses the PRM planner in conjunction with updating the TRG using Algorithm \ref{algo_updateTRG} to find a path to its destination vertex $v'$. The path returned by the PRM planner is executed and the moving robot releases the winner token once it is outside its collision circle (lines $9, 14-22$). If the PRM planner is not able to find a path to the goal, e.g., if the goal is unreachable because there is another robot within the collision circle that is stopped right at the goal location, the moving robot relinquishes its right to move by setting its priority to a high value ($\infty$) (lines $11-12$). Another robot from within the set of stopped robots gets a chance to run the bully algorithm and attempts to move. This protocol ensures that at least one robot exits the collision circle with each execution of the bully algorithm, and finally there is only one robot left inside the collision circle. This robot then reverts to using Algorithm \ref{algo_task_select} to plan its path.

\section{Theoretical Results}
\label{sec_proofs}
\begin{figure}
  \begin{tabular}{cc}
    \includegraphics[width=1.2in]{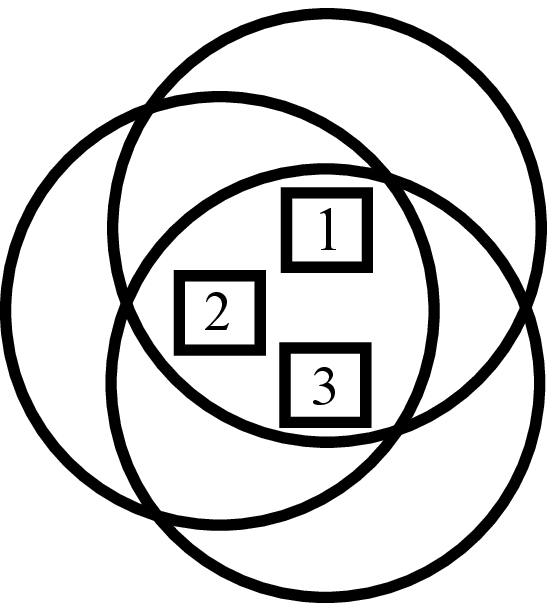} &
    \includegraphics[width=1.0in]{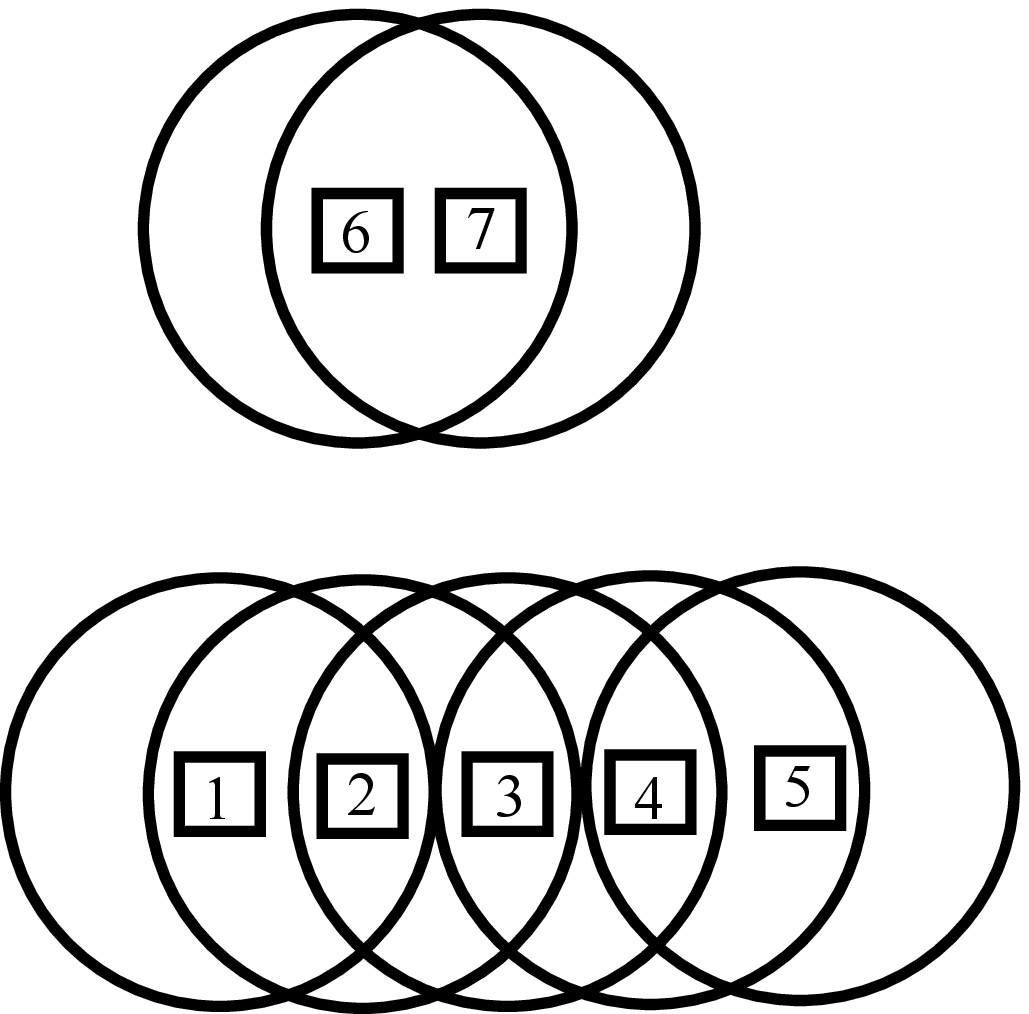}\\
    {\small{(a)}} & {\small{(b)}}
  \end{tabular}
  \caption{{\small{(a) Example collision circle.  The boxes represent
        the robots, the numbers are the robot's ids, and the circles
        represnt the circle of size $r_{coll}$ around the robot.
        Collision circle for robot 1 is $O_1 = \{1, 2, 3\}$, (b) Two
        example collision shapes.  The boxes represent the robots, the
        numbers are the robot's ids, and the circles represnt the
        circle of size $r_{coll}$ around the robot.  Collision shape
        for robot 1 is $C_1 = \{1, 2, 3, 4, 5\}$, Likewise collision
        shape for robot 6 is $C_6 = \{6,7\}$.}}}
\label{fig:collisionCircleAndShape}
\end{figure}

In this section, we prove some properties related to our proposed algorithms.

\subsection{Task Selection Algorithm Properties}
\begin{theorem}
Algorithm {\em TRGTaskSelect} finds a solution that is admissible, that is, it never overestimates the expected cost to a task calculated using the TRG.
\label{theo:admissible}
\end{theorem}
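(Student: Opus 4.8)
The plan is to prove admissibility in the sense familiar from heuristic search: the cost estimate the algorithm assigns to reaching any task is never larger than the \emph{true} cost the robot would incur, where the true cost is the length of the shortest collision-free path between the relevant task locations given complete knowledge of the environment's obstacles. I would establish this in two layers — first at the level of the individual TRG edge costs $c_{ij}$ produced by the PRM planner, and then at the level of the MDP policy that composes those edge costs into a task schedule.

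First I would argue that each edge cost $c_{ij}$ is a lower bound on the true traversal cost of edge $e_{ij}$. The base case is the initialization: each $c_{ij}$ starts at the Euclidean distance $dist(v_i,v_j)$, and since the straight-line segment is the shortest curve joining two points, this is a lower bound on any collision-free path and hence on the true cost. For the inductive step I would exploit the monotonicity of obstacle discovery: at any instant the robot plans using only the obstacles it has perceived so far, which form a subset of the full obstacle set. Because restricting to fewer obstacles only enlarges the feasible set of paths (equivalently, it drops penalty terms from the segment costs summed in Equation~\ref{eq:prm-cost}), the minimum-cost PRM path computed under partial knowledge can only be cheaper than the one computed under full knowledge. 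Thus each successive replanned $c_{ij}$ is non-decreasing and never exceeds the true cost, so it stays admissible throughout execution.

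Next I would lift this to the MDP. The value-iteration update in Equation~\ref{eq:mdp} builds $U(v_i)$ from the inverse edge cost $c_{curr,i}^{-1}$ and the availability-weighted transition probabilities of Equation~\ref{eq:mdp-prob}. Since larger reward corresponds to smaller cost, and the rewards are derived from edge costs that are themselves lower bounds, the policy maximizing expected cumulative reward selects the edge of minimum admissible expected cost $(1-p_{ij})c_{ij}$, which is bounded above by $c_{ij}$ because $0 \le p_{ij} \le 1$. I would then argue that the cost \emph{TRGTaskSelect} associates with the chosen task is exactly this minimum expected cost returned by the motion planner, so it inherits the lower-bound property and never overestimates the TRG-computed expected cost.

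The main obstacle I anticipate is the penalty term in the segment-cost equation: unlike a hard collision constraint it is a soft, arbitrarily large additive term, so I must argue carefully that it does not induce an overestimate. The cleanest route is to observe that the path the robot actually commits to is collision-free, so the penalty-laden segments are never traversed and the reported $c_{ij}$ coincides with the length of a genuinely navigable path, which remains a lower bound on the true shortest path under full obstacle knowledge. A secondary subtlety is the inverse-cost reward transformation and the discount factor $\gamma$ in the MDP, where I would need to confirm that mapping costs to rewards preserves their ordering so that ``maximize expected reward'' faithfully implements ``minimize admissible expected cost.''
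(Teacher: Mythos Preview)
Your proposal rests on a different reading of the theorem than the paper intends. You interpret ``admissible'' in the A$^*$ sense---the algorithm's cost estimate is a lower bound on the \emph{true} traversal cost under full obstacle knowledge---and accordingly you try to argue that PRM with partial obstacle information underestimates the fully-informed shortest path. But the statement says the algorithm never overestimates ``the expected cost to a task \emph{calculated using the TRG}.'' The reference quantity is not an external ground-truth cost; it is the minimum expected cost obtainable from the current TRG itself. The paper's proof makes this explicit: it supposes a hypothetical algorithm $A'$ that, operating on the same TRG, selects a different edge $e_{ij'}$ and achieves lower expected cost than {\em TRGTaskSelect}'s choice $e_{ij}$. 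Writing $c_{ij'}=c_{ij}+\Delta$ with $\Delta>0$ (since the MDP picks the minimum-cost edge) and computing $A'$'s expected cost as $(1-p_{ij})c_{ij'}+p_{ij}c_{ij}$, the assumption that this is below $(1-p_{ij})c_{ij}$ forces $p_{ij}>1$, a contradiction. The whole argument is a three-line algebraic comparison internal to the TRG model; no appeal to true obstacle-complete costs, PRM monotonicity, or the discount factor is needed.

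Beyond the mismatch in target, your PRM layer has its own gap. A probabilistic roadmap returns the shortest path \emph{in the sampled roadmap}, not the shortest collision-free path in $Q_{free}$, so even with zero obstacles the reported $c_{ij}$ can exceed the Euclidean distance and hence the true cost; your ``fewer obstacles $\Rightarrow$ larger feasible set $\Rightarrow$ lower minimum'' monotonicity argument applies to exact planners, not sampling-based ones. The penalty term compounds this: the cost in Equation~\ref{eq:prm-cost} is the expected, penalty-weighted sum along $\rho_{ij}$, which can strictly exceed the length of the executed collision-free path, so your proposed workaround (``the penalty segments are never traversed'') does not rescue the lower-bound claim. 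To align with the paper you should drop the comparison to true costs entirely and instead show directly that no alternative edge selection from the current TRG can yield a smaller expected cost than the MDP's choice.
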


\begin{proof} (By Contradiction.) Let us suppose there is another algorithm $A'$ that selects TRG edges with lower expected cost than Algorithm {\em TRGTaskSelect}. Let $v_j$ and $v_{j'}$ ($v_j \neq v_{j'}$) respectively denote the task (vertex) selected by Algorithm {\em TRGTaskSelect} and $A'$, when the robot is at vertex $v_i$. Using Equations \ref{eq:mdp} and \ref{eq:mdp-prob}, the MDP in Algorithm {\em TRGTaskSelect} (Alg. \ref{algo_updateTRG}, line $11$) calculates the expected cost to reach $v_j$ from $v_i$ using TRG edge $e_{ij}$ as $(1-p_{ij})c_{ij}$. Algorithm $A'$, which does not follow the vertex recommendation made by Algorithm {\em TRGTaskSelect}, selects a different TRG edge $e_{ij'}$, which has cost $c_{ij'}$ with probability $(1-p_{ij})$ and TRG edge $e_{ij}$ with probability $p_{ij}$. The robot's expected cost to reach $v_{j'}$ using Algorithm $A'$ is then $(1- p_{ij})c_{ij'} + p_{ij}c_{ij}$. Since, by Equation \ref{eq:mdp}, the MDP selects the edge with the minimum cost, it follows that, $c_{ij}< c_{ij'}$; let $c_{ij'} = c_{ij} + \Delta$, $\Delta > 0$. By our assumption that the expected cost calculated by $A'$ is lower than that of Algorithm {\em TRGTaskSelect}, we get:
\begin{eqnarray}
(1- p_{ij})(c_{ij} + \Delta) + p_{ij}c_{ij} < (1-p_{ij})c_{ij} \\
\mbox{or,} \quad (1- p_{ij}) \Delta + p_{ij}c_{ij} < 0 \nonumber \\
\mbox{or,} \quad p_{ij} > \frac{\Delta}{\Delta -c_{ij}} \nonumber \\
\mbox{or,} \quad p_{ij} > \frac{1}{1 -\frac{c_{ij}}{\Delta}} \nonumber
\label{eq:trgproof1}
\end{eqnarray}
Because, by definition, $c_{ij}$ and $\Delta$ are both $>0$, we get $p_{ij} > 1$, which is not valid as $p_{ij} \in [0,1]$. Therefore, the expected cost returned by $A'$ cannot be lower. Hence proved.
\end{proof}

\begin{lemma}
\label{lemma:admissible}
The solution found by Algorithm {\em TRGTaskSelect} remains admissible when there are errors in the edge cost and availability estimates.
\end{lemma}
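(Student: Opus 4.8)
The plan is to show that the contradiction argument underlying Theorem~\ref{theo:admissible} is robust to estimation error, because that argument never uses the fact that the stored values are exact. I would begin by fixing an error model: write the erroneous entries actually held in the robot's TRG as $\hat{c}_{ij}$ and $\hat{p}_{ij}$, obtained from the true quantities $c_{ij}$ and $p_{ij}$ by $\hat{c}_{ij}=c_{ij}+\delta^{c}_{ij}$ and $\hat{p}_{ij}=p_{ij}+\delta^{p}_{ij}$ for arbitrary perturbations $\delta^{c}_{ij},\delta^{p}_{ij}$ induced by noisy sensing or stale task-completion messages. The central observation is that the MDP in Algorithm~\emph{TRGTaskSelect} (Eqn.~\ref{eq:mdp}) evaluates its policy on whatever numbers currently populate the TRG, so its selection is the minimum-expected-cost edge \emph{with respect to} $\hat{c}_{ij},\hat{p}_{ij}$, and admissibility is asserted relative to these same (possibly erroneous) TRG values.

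Next I would re-run the derivation of Theorem~\ref{theo:admissible} verbatim with $\hat{c}_{ij},\hat{p}_{ij}$ in place of $c_{ij},p_{ij}$. Supposing an alternative algorithm $A'$ chooses a different edge $e_{ij'}$, by Eqn.~\ref{eq:mdp-prob} its expected cost is $(1-\hat{p}_{ij})\hat{c}_{ij'}+\hat{p}_{ij}\hat{c}_{ij}$ while \emph{TRGTaskSelect} incurs $(1-\hat{p}_{ij})\hat{c}_{ij}$, and minimality of the MDP's choice forces $\hat{c}_{ij}<\hat{c}_{ij'}$, say $\hat{c}_{ij'}=\hat{c}_{ij}+\Delta$ with $\Delta>0$. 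Assuming $A'$ were cheaper then leads, exactly as in the theorem, to $\hat{p}_{ij}>\tfrac{1}{1-\hat{c}_{ij}/\Delta}$ and hence to $\hat{p}_{ij}>1$. The point I would stress is that this chain invoked only two structural facts about the TRG entries --- that the MDP picks the minimum expected cost, and that $\hat{c}_{ij}>0$ together with $\hat{p}_{ij}\in[0,1]$ --- and not their accuracy, so it transfers unchanged to the perturbed setting.

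The main obstacle, and the step I would treat most carefully, is verifying that the perturbed estimates still lie inside this admissible domain, since otherwise the concluding line $\hat{p}_{ij}>1$ would cease to be a contradiction. For the cost I would appeal to the construction in Eqn.~\ref{eq:prm-cost}: each segment cost is a convex combination of a strictly positive $penalty$ and a strictly positive Euclidean distance, so any path the PRM returns under erroneous obstacle perception still yields $\hat{c}_{ij}>0$. For the availability I would appeal to the normalization in Eqn.~\ref{eq:hmm-pij}, which forces $\hat{p}_{ij}\in[0,1]$ irrespective of the HMM's (possibly wrong) $TNA$ estimate. With both constraints intact the contradiction stands, and the selected solution remains admissible.

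Finally I would note that this establishes admissibility relative to the robot's current belief, which is precisely the sense in which Theorem~\ref{theo:admissible} is phrased (``calculated using the TRG''). If instead one wanted admissibility against the \emph{true} inter-task costs, the harder part would be propagating $\delta^{c}_{ij},\delta^{p}_{ij}$ through the product $(1-\hat{p}_{ij})\hat{c}_{ij}$ and showing the belief still underestimates the realized expected cost; that route would require an explicit bound on the estimation error rather than the purely structural argument above, and I would flag it as the genuinely nontrivial extension.
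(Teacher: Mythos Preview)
Your argument is correct and, in fact, cleaner than the paper's own proof. The paper proceeds by introducing explicit error terms $\epsilon_c,\epsilon_p$ (with the side assumptions $\epsilon_c\ll c_{ij}$ and $0\le p_{ij}+\epsilon_p\le 1$), substituting them into the key inequality from Theorem~\ref{theo:admissible}, and then verifying $(p_{ij}+\epsilon_p)(c_{ij}+\epsilon_c)+(1-(p_{ij}+\epsilon_p))\Delta>0$ by treating it as linear in $p_{ij}$ and checking the boundary cases $p_{ij}\in\{0,1\}$ together with the induced ranges of $\epsilon_p$. You instead observe that the contradiction in Theorem~\ref{theo:admissible} relies only on the structural facts $\hat c_{ij}>0$ and $\hat p_{ij}\in[0,1]$, and you \emph{derive} those facts from the model itself (positivity from the PRM segment costs in Eqn.~\ref{eq:prm-cost}, the probability range from the normalization in Eqn.~\ref{eq:hmm-pij}) rather than assuming them as constraints on the error. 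This buys you a shorter proof that does not need the smallness hypothesis $\epsilon_c\ll c_{ij}$; the paper's route, by contrast, keeps the error magnitudes explicit and reads more like a perturbation analysis. Your closing remark distinguishing belief-relative from true-cost admissibility is also well taken and goes slightly beyond what the paper states.
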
 

\begin{proof}
Let $c_{ij, est}, \epsilon_c$ and $p_{ij,est}, \epsilon_p$ denote the estimated values and errors in edge costs and availabilities returned by the PRM and HMM respectively,(using Algorithm \ref{algo_updateTRG}, lines $7$ and $9$), where $\epsilon_c \in \mathbb{R}, \epsilon_c \ll c_{ij}$ and $p_{ij,est} \in [0,1]$ and $0 \leq p_{ij} + \epsilon_p \leq 1$. If the true cost and availability of edge $e_{ij}$ is denoted by $c_{ij}$ and $p_{ij}$ respectively, $c_{ij,est} = c_{ij} + \epsilon_c$ and $p_{ij,est} = p_{ij}+\epsilon_p$. For Algorithm {\em TRGTaskSelect} to not overestimate edge costs with these estimated values Theorem \ref{theo:admissible} should hold when they are used in Equation \ref{eq:trgproof1}. That is, $(1- p_{ij,est})(c_{ij,est} + \Delta) + p_{ij,est}c_{ij,est} > (1-p_{ij,est})c_{ij,est}$. Substituting above values of $c_{ij,est}$ and $p_{ij,est}$ we get:

\begin{equation}
(p_{ij} + \epsilon_p)(c_{ij}+\epsilon_c) + (1-(p_{ij}+ \epsilon_p))\Delta > 0
\label{eq:trgproof2}
\end{equation} 

As $p_{ij}$ has limited domain of $[0,1]$, and the above equation is linear in $p_{ij}$, we can check the two boundary points of the domain; if the inequality is satisfied for both boundaries, then it is satisfied for all points inside the domain. Note that then when $p_{ij}=0$, $\epsilon_p \in [0,1]$ and when $p_{ij}=1$, $\epsilon_p \in [-1,0]$. Substituting $p_{ij}=0$ in Equation \ref{eq:trgproof2}, we get $\epsilon_p(c_{ij}+\epsilon_c + (1-\epsilon_p)\Delta > 0$. Since $\epsilon_p \in [0,1]$ and $c_{ij},\Delta >0$, each of the terms on l.h.s. of the last inequality is $>0$ and the inequality holds. Similarly, substituting $p_{ij}=1$ in Equation \ref{eq:trgproof2}, we get $(\epsilon_p + 1) (c_{ij}+\epsilon_c) - \epsilon_p \Delta >0$.  Since $\epsilon_p \in [-1,0]$ when $p_{ij}=0$, we substitute the boundary values of $\epsilon_p$ in the last equation; when $\epsilon_p = -1$, we get $\Delta >0$ which is valid from the definition of $\Delta$; when $\epsilon_p=0$, we get $(c_{ij}+\epsilon_c)>0$, which is valid, because, by definition $\epsilon \ll c_{ij}$.
\end{proof}

\begin{theorem}
\label{theo:consistent}
The solution found by Algorithm {\em TRGTaskSelect} is consistent.
\end{theorem}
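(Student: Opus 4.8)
The plan is to prove consistency by establishing the triangle-inequality (monotonicity) condition that the algorithm's expected-cost estimates satisfy along every TRG edge, since consistency is precisely this triangle inequality and is the stronger of the two heuristic properties, subsuming the admissibility of Theorem~\ref{theo:admissible}. Writing $\hat{c}(v_i)$ for the availability-weighted expected cost $(1-p_{curr,i})c_{curr,i}$ that the MDP assigns to reaching a vertex $v_i$ from the robot's current location $v_{curr}$, I would show that for every edge $e_{ij}=(v_i,v_j)$ of the TRG the estimate satisfies
\[
\hat{c}(v_j) \;\leq\; \hat{c}(v_i) + (1-p_{ij})c_{ij},
\]
where the step cost on the right is exactly the expected edge cost used in Equations~\ref{eq:mdp} and~\ref{eq:mdp-prob}. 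Admissibility then re-emerges as the special case in which $v_j$ is the goal, so proving this triangle inequality is the heart of the matter.

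First I would establish that the raw PRM path costs obey the triangle inequality, $c_{curr,j} \leq c_{curr,i} + c_{ij}$. This follows directly from the shortest-path construction of Equation~\ref{eq:prm-cost}: because each $c_{ij}$ is defined as the minimum-cost concatenation of free-space segments (step~2 of the edge-cost computation), concatenating the optimal $v_{curr} \to v_i$ path with the optimal $v_i \to v_j$ path produces a feasible $v_{curr} \to v_j$ route of cost $c_{curr,i}+c_{ij}$; since $c_{curr,j}$ is taken as the arg-min over all such routes, it can be no larger. This is simply the metric property of sum-of-segment shortest paths and needs only that the penalty-augmented segment costs be nonnegative, which holds by construction.

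Next I would lift this inequality to the availability-weighted estimates, and this is the step I expect to be the main obstacle, because multiplying through by $(1-p)$ does not automatically preserve a triangle inequality: the weight $1-p_{curr,j}$ on the left differs from the products of weights accumulated along the two-edge detour. I would control the availability factors using $p_{ij}\in[0,1]$ (the TOP-U constraint of Equation~\ref{eq:topu}) together with the normalization of Equation~\ref{eq:hmm-pij}, which forces the outgoing availabilities at a vertex to sum to one and hence keeps every $(1-p)$ factor in $[0,1]$. Exploiting that the resulting inequality is linear in each availability variable, I would adopt the same boundary-checking strategy used in Lemma~\ref{lemma:admissible}: verify the inequality at the extreme values $p=0$ and $p=1$ and then invoke linearity to conclude it holds throughout the unit interval.

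Finally, I would tie the per-edge inequality to the value function produced by value iteration. At the fixed point of Equation~\ref{eq:mdp} the utilities $U(\cdot)$ satisfy the Bellman optimality equation, so the induced cost-to-go is self-consistent, the $\max$ over outgoing edges enforcing exactly the monotone relationship above between a vertex and its best successor. I would therefore argue that the policy returned by \emph{TRGTaskSelect} never violates this relationship on any edge it can traverse, which is the definition of a consistent estimate, and note that the admissibility guarantee of Theorem~\ref{theo:admissible} is recovered as the terminal case of the same triangle inequality.
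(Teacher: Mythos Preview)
Your first step---showing that the raw PRM path costs satisfy the triangle inequality $c_{ij}\le c_{ik}+c_{kj}$ via the concatenation/shortest-path argument on Equation~\ref{eq:prm-cost}---is essentially the paper's entire proof.  The paper frames it as a contradiction (assume $c_{ij}>c_{ik}+c_{kj}$, note that the PRM planner returns the minimum-cost free-space path so no intermediate free-space point $q$ can satisfy $c_{ij}>c_{iq}+c_{qj}$, and set $q=v_k$), but the content is the same metric observation you make directly.

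Everything you propose after that---lifting the inequality to the availability-weighted estimates $(1-p)c$ and then tying it to the Bellman fixed point---goes beyond what the paper actually proves.  The paper's notion of ``consistency'' here is just the triangle inequality on the unweighted $c_{ij}$; it never attempts to show a monotone heuristic property for the weighted quantities, and its proof makes no use of the $p_{ij}$, the HMM normalization, or the MDP utilities.  So your reading of the theorem is stronger than the authors', and the step you correctly flag as the ``main obstacle'' (pushing the triangle inequality through the $(1-p)$ factors) is one the paper simply sidesteps by not claiming it.  Your boundary-checking idea for that step is also shaky as stated: the inequality involves three coupled probabilities constrained by the normalization of Equation~\ref{eq:hmm-pij}, so checking the corners of the unit cube is neither sufficient nor the right domain.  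If you restrict yourself to the paper's narrower claim, your concatenation argument already finishes the job.
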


\begin{proof}
Suppose the robot is at vertex $v_i$ and $v_j$ is the next vertex selected by Algorithm {\em TRGTaskSelect}. 

For consistency property, we need to show that $c_{ij} < c_{ik} + c_{kj}$ for any $v_k \neq v_i, v_j$. We prove by contradiction - suppose $c_{ij} > c_{ik} + c_{kj}$. Note that the TRG is a complete graph and vertices $v_i, v_j$ and $v_k$ form a triangle. Consequently, $||e_{ij}|| < ||e_{ik}||+ ||e_{kj}||$, where $||e_{ij}||$ is the Euclidean distance between $v_i$ and $v_j$. From Section \ref{sec:edgecost}, the path $\rho_{ij}$ for navigating the robot along TRG edge $e_{ij}$ has cost $c_{ij}$; it is found by the path planner and from Equation \ref{eq:prm-cost}, it is guaranteed to be the minimum cost, collision-free path connecting $v_i$ and $v_j$. In other words $\nexists q \in Q_{free}$, satisfying $c_{ij} > c_{iq} +c_{qj}$. Because a task location has to be in the free space in the environment, $v_k \in Q_{free}$ and, so, the last inequality is valid for $q=v_k$. Therefore, $c_{ij} \ngtr c_{ik} +c_{kj}$. Therefore, our assumption was invalid. Hence proved.
\end{proof}

\begin{theorem}
The solution found by Algorithm {\em TRGTaskSelect} is optimal.
\end{theorem}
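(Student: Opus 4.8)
The plan is to obtain optimality as a direct consequence of the two preceding results by appealing to the classical characterization of optimal heuristic search: a best-first search whose cost estimates are both admissible and consistent returns a minimum-cost solution, which is the standard A* optimality guarantee~\cite{Russell09}. Theorem~\ref{theo:admissible} has already established that \emph{TRGTaskSelect} never overestimates the expected weighted cost $(1-p_{ij})c_{ij}$ to reach a task, and Theorem~\ref{theo:consistent} has established consistency, namely the triangle inequality $c_{ij} \le c_{ik}+c_{kj}$ on the edge costs. The aim is to show that, taken together, these two properties force the edge sequence selected by the MDP policy (Equation~\ref{eq:mdp}) to coincide with a minimum expected-cost traversal of the TRG.

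First I would make precise the sense in which the claim holds. The value-iteration solution of Equation~\ref{eq:mdp} returns, at each vertex $v_i$, the action (TRG edge) maximizing the expected discounted reward, equivalently the edge minimizing the expected weighted cost $(1-p_{ij})c_{ij}$. Since the reward is the inverse edge cost and the transition probabilities are the edge availabilities of Equation~\ref{eq:mdp-prob}, the fixed point $U(\cdot)$ of value iteration is the unique optimal value function, so the greedy policy it induces is optimal for the MDP by the standard Bellman-optimality/contraction argument~\cite{Russell09}. This reduces the theorem to showing that the MDP's optimal policy is also optimal for the TOP-U objective over the TRG.

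Next I would use consistency to lift this per-step (greedy) optimality to global optimality of the traversal. The key step is that, by Theorem~\ref{theo:consistent}, there is never an incentive to reach a task $v_j$ via an intermediate vertex $v_k$, since $c_{ij}\le c_{ik}+c_{kj}$; hence the direct edge chosen by the MDP at $v_i$ cannot be improved by any detour. Combined with admissibility (Theorem~\ref{theo:admissible}), which guarantees the estimated cost of the chosen edge does not exceed its true cost, an exchange argument shows that replacing any edge of the MDP-selected sequence with an alternative cannot reduce the total expected weighted cost. Proceeding by induction on the vertices removed from $V$ then yields that the full traversal has minimum cost.

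The main obstacle I anticipate is reconciling the probabilistic, expected-cost objective of the MDP with the deterministic triangle inequality proved in Theorem~\ref{theo:consistent}: consistency was stated for the raw costs $c_{ij}$, whereas the quantity actually minimized is the availability-weighted cost $(1-p_{ij})c_{ij}$, so I would need to verify that the consistency argument carries over to the weighted costs, or argue that it suffices because the availability weighting only rescales each edge's contribution. A secondary subtlety is that the full TOP-U problem is an instance of TSP and hence NP-hard in general (as noted just after Equation~\ref{eq:topu}); I would therefore state the optimality claim as optimality of the path and policy returned by value iteration on the current TRG estimates, rather than of an entire multi-task tour, so as not to implicitly claim a polynomial-time exact TSP solver.
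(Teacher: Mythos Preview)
Your proposal takes essentially the same approach as the paper: the paper's proof is literally two sentences stating that admissibility (Theorem~\ref{theo:admissible}) plus consistency (Theorem~\ref{theo:consistent}) together imply optimality, with no further elaboration. Your plan is strictly more detailed than what the paper actually provides, and the subtleties you flag (the mismatch between the raw costs $c_{ij}$ used in the consistency theorem and the availability-weighted costs $(1-p_{ij})c_{ij}$ actually optimized, and the tension with the NP-hardness of the underlying TSP) are genuine issues that the paper's proof simply does not address.
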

\begin{proof}
From Theorems \ref{theo:admissible} and \ref{theo:consistent} it follows that the solution found by Algorithm {\em TRGTaskSelect} is both admissible and consistent. Therefore, the solution is optimal.
\end{proof}

Similar to Lemma \ref{lemma:admissible}, it can be shown that the solution found by Algorithm {\em TRGTaskSelect} is consistent and hence optimal, even with errors in path cost estimates.

\subsection{Completeness of Coordination Algorithm}

Next, we analyse the synchronization properties of our proposed multi-robot coordination algorithm to show that it does not give rise to deadlock or livelock conditions between robots, resulting in their inability to move and reach tasks.  To facilitate this analysis, we consider the movement of the robots between sets, corresponding to robot states defined by the algorithm.

Let dist($a_i$, $a_j$) denote the Euclidean distance between robots $a_i, a_j \in R$.  We define the collision circle for robot $a_i$ as $O_i = \{ a_j : a_j \in R,\,$dist($a_i$, $a_j$)$ \leq r_{coll} \}$, where $r_{coll}$ is the distance away from robot $a_i$ that we consider robots to be in immediate risk of collision.  An example of a collision circle can be seen in Figure~\ref{fig:collisionCircleAndShape}(a), where robot 1 has robots 2 and 3 in its collision circle.  This means that $O_1 = \{1, 2, 3\}$. We next define the concept of a collision shape.  A collision shape is the group of all robots that are either in each others collision circle, or, through sharing collision circles with other robots, can reach the collision circle of another robot without exiting any overlapping collision circles.  As shown in Figure~\ref{fig:collisionCircleAndShape}(b), robot 3 is not in the collision circle of robot 1, however it is in the same collision shape as robot 1, because by traveling through the collision circle of robot 2, robot 1 can reach the collision circle of robot 3.  On the other hand, robot 6 is not in the collision shape of robot 1 because there is no way to move into the collision circle of robot 6 without leaving collision circles.  To help define the collision shape, we first define a recursive helper set $H_i^{[n]}$, corresponding to robot $a_i \in R$ at recursive step $n$, as $H_i^{[n+1]} = H_i^{[n]} \bigcup_{j \in   H_i^{[n]}} O_j$ and $H_i^{[0]} = O_i$.  As $n \rightarrow \infty$, $H_i^{[n]}$ becomes the set of all robots in the current collision shape. 
We can now define a collision shape as:
\begin{definition}
Let $H_i^{[n]}$ denote a helper set of robot $a_i$, as defined above. Then collision shape $C_i = \lim_{n \rightarrow \infty} H_i^{[n]}$ 
\end{definition}

The collision circle and collision shape of a robot get updated as the algorithm proceeds.  We use $O_i^{(t)}$ and $C_i^{(t)}$ to denote the collision circle and collision shape of robot $i$ at round $t$ respectively.
\omitit{
Robots can enter or exit $C_i$ or
$O_i$, in successive rounds.  Formally, if we let $U_i \subseteq C_i^{(t)}$
denote the set of robots from $C_i^{(t)}$ that have exited the
collision circle, and let $U_i' \not \subseteq C_i^{(t)}$ denote the
set of robots not in $C_i^{(t)}$ that are entering the collision
circle, we can define the collision circle at the next iteration as
$C_i^{(t+1)} = (C_i^{(t)} \backslash U_i') \union U_i$.}  We now define
three subsets of $C_i^{(t)}$ used in our analysis.

\begin{itemize}
\item $W_i^{(t)} \subseteq C_i^{(t)}$: The set of all robots in the collision shape of robot $a_i$ waiting for their turn to move
\item $L_i^{(t)} \subseteq C_i^{(t)}$: The set of all robots in the collision shape of robot $a_i$ that are allowed to move
\item $S_i^{(t)} \subseteq C_i^{(t)}$: The set of all robots in the collision shape of robot $a_i$ that have surrendered their right to move 
\omitit{\item $X_i^{(t)} \subseteq C_i^{(t)}$: The set of all robots that have left the collision shape of robot $a_i$}
\end{itemize}

We also define a set $X_i^{(t)}$ as the set of all robots that were
originially in the collision shape of robot $a_i$, but have since
left.  Formally, $X_i^{(t)} = \bigcup_{j=0}^{t} C_i^{(j)} \backslash
C_i^{(t)}$.  The elements of the above sets are mutually exclusive and
the subsets fully partition the set $C_i^{(t)}$, in other words the
following properties hold: 
$W_i^{(t)} \cap L_i^{(t)} = \{\emptyset\}$, 
$W_i^{(t)} \cap S_i^{(t)} = \{\emptyset\}$,  
$L_i^{(t)} \cap S_i^{(t)} = \{\emptyset\}$,  
$W_i^{(t)} \cup L_i^{(t)} \cup S_i^{(t)} = C_i^{(t)}$, and
$X_i^{(t)} \cap C_i^{(t)} = \{\emptyset\}$

\omitit{
\begin{enumerate}
  \item $W_i^{(t)} \cap L_i^{(t)} = \{\emptyset\}$ 
  \item $W_i^{(t)} \cap S_i^{(t)} = \{\emptyset\}$ 
 \omitit{ \item $W_i^{(t)} \cap X_i^{(t)} = \{\emptyset\}$ }
  \item $L_i^{(t)} \cap S_i^{(t)} = \{\emptyset\}$ 
\omitit{
  \item $L_i^{(t)} \cap X_i^{(t)} = \{\emptyset\}$  
  \item $S_i^{(t)} \cap X_i^{(t)} = \{\emptyset\}$}
  \item $W_i^{(t)} \cup L_i^{(t)} \cup S_i^{(t)} = C_i^{(t)}$
  \item $X_i^{(t)} \cap C_i^{(t)} = \{\emptyset\}$
\end{enumerate}
}

\omitit{
{\color{red} Choose either the list below or Table~\ref{tab:motions}}

The robots are restricted to making movements between sets only during
time step changes, and the allowable set changes are the following.
\begin{itemize}
\item From $W_i^{(t)}$ to $L_i^{(t+1)}$, robots selected from waiting set to be allowed to move, (line~\ref{line:WtoL})
\item From $L_i^{(t)}$ to $S_i^{(t+1)}$, robots allowed to move are now unable to move, (line~\ref{line:LtoS})
\item From $L_i^{(t)}$ to $W_i^{(t+1)}$, robots allowed to move forced into waiting through collision with a higher priority robot, (line~\ref{line:LtoW})
\item From $L_i^{(t)}$ to $X_i^{(t+1)}$, robot moved out of the collision shape, (line~\ref{line:LtoX})
\item From $S_i^{(t)}$ to $W_i^{(t+1)}$, robots previously unable to move, moved back to waiting to check for valid motions, (line~\ref{line:StoW})
\end{itemize}

Similarly, there are also motions that a robot is not allowed to make between time step changes
\begin{itemize}
\item $W_i^{(t)}$ to $S_i^{(t+1)}$, a robot can determine that there are no valid paths only through a replan, which only robots in set $L_i^{(t)}$ are allowed to do.
\item $W_i^{(t)}$ to $X_i^{(t+1)}$, a stationary robot cannot move out of the collision shape
\item $S_i^{(t)}$ to $L_i^{(t+1)}$, only waiting robots can be selected for motion, as the members of $S_i^{(t)}$ do not have valid paths with the current robot configuration
\item $S_i^{(t)}$ to $X_i^{(t+1)}$, like $W_i^{(t)}$, robots in $S_i^{(t)}$ are stationary, and as such cannot move out of the collision shape
\end{itemize}

\begin{table}
\centering
\begin{tabular}{|l|l|l|} 
\hline
{\textbf{Movement}} & {\textbf{Allowed}} & {\textbf{Line}} \\ \hline
From $W_i^{(t)}$ to $L_i^{(t+1)}$ & Yes & \ref{line:WtoL} \\ \hline 
From $L_i^{(t)}$ to $S_i^{(t+1)}$ & Yes & \ref{line:LtoS} \\ \hline
From $L_i^{(t)}$ to $W_i^{(t+1)}$ & Yes & \ref{line:LtoW} \\ \hline
From $L_i^{(t)}$ to $X_i^{(t+1)}$ & Yes & \ref{line:LtoX} \\ \hline
From $S_i^{(t)}$ to $W_i^{(t+1)}$ & Yes & \ref{line:StoW} \\ \hline
From $W_i^{(t)}$ to $S_i^{(t+1)}$ & No & \\ \hline
From $W_i^{(t)}$ to $X_i^{(t+1)}$ & No & \\ \hline
From $S_i^{(t)}$ to $L_i^{(t+1)}$ & No & \\ \hline
From $S_i^{(t)}$ to $X_i^{(t+1)}$ & No & \\ \hline
\end{tabular}
\caption{Motions between sets}
\label{tab:motions}
\end{table}
}

\begin{lemma}
If during round $t$, there are robots currently waiting to be allowed to move, then in the next round, there has to be at least one robot that is selected for movement. 
That is, if $W_i^{(t)} \not = \{\emptyset\}$ then $L_i^{(t+1)} \not = \{\emptyset\}$.
\label{lemma:emptyL}
\end{lemma}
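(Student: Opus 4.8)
The plan is to argue directly from the bully-election step of Algorithm~\ref{algo_collision_avoidance} (line~\ref{line:WtoL}), exploiting the dichotomy that every waiting robot carries a \emph{finite} priority while every surrendered robot carries priority $\infty$. Concretely, the only step that assigns a robot a finite priority is the one that moves it into the waiting set, setting its priority to its (finite, unique) robot id (line~\ref{line:StoW}), whereas the only way a robot acquires priority $\infty$ is by surrendering into $S_i$ (line~\ref{line:LtoS}). Hence the hypothesis $W_i^{(t)} \neq \{\emptyset\}$ guarantees that at least one robot in the collision shape has finite priority, and I would use this to force the selection of a finite-priority winner in the following round.

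First I would let $a_m$ be the robot of minimum priority in $C_i^{(t)}$. Because $W_i^{(t)} \neq \{\emptyset\}$ contributes at least one finite value and robot ids are unique, this minimum is finite and attained by a single robot $a_m \in W_i^{(t)} \cup L_i^{(t)}$. Next I would observe that $a_m \in O_m \subseteq C_i^{(t)}$, so $a_m$ being the global minimum over the whole shape is in particular the minimum over its own collision circle $O_m$; therefore the winner-selection step (line~\ref{line:WtoL}) names $a_m$ the winner of $O_m$ and grants it the right to move. Finally I would invoke the transition structure of the algorithm: a robot in $W_i$ can only remain in $W_i$ (upon losing the election) or pass to $L_i$ (upon winning it), since the direct step $W_i \to S_i$ is impossible because surrender requires a failed replan, which only a robot already in $L_i$ performs. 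Consequently $a_m \in L_i^{(t+1)}$, so $L_i^{(t+1)} \neq \{\emptyset\}$.

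The main obstacle I anticipate is bridging the gap between the \emph{collision circle} $O_i$ on which the election physically operates and the \emph{collision shape} $C_i$ in whose partition the lemma is phrased; the minimum-priority argument above is exactly what closes this gap, since the global minimum over $C_i$ is automatically the local minimum inside the winner's own circle, and no surrendered ($\infty$-priority) robot can ever attain it. A secondary delicacy is the meaning of ``selected for movement'': I would read $L_i^{(t+1)}$ as the set produced by the election step, so that $a_m$ counts as having entered $L_i^{(t+1)}$ even if it should subsequently discover no feasible path and surrender on the transition from round $t+1$ to $t+2$. Pinning down this reading keeps the claim about the \emph{next} round well-posed and prevents the later $L_i \to S_i$ surrender from being conflated with the winner-selection event itself.
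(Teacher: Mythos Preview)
Your proof is correct and rests on the same mechanism as the paper's---the election at line~\ref{line:WtoL}---but you are considerably more explicit than the paper on two points the paper leaves implicit. First, the paper simply asserts that the robot selected by the election ``can be a member of one of two sets, $a_j \in L_i^{(t)}$ or $a_j \in W_i^{(t)}$'' without saying why $a_j \notin S_i^{(t)}$; your finite/$\infty$ priority dichotomy supplies exactly that missing justification. Second, the paper does not address the circle/shape mismatch at all, whereas your global-minimum argument (the minimum over $C_i$ is automatically the minimum over its own $O_m$) cleanly closes that gap. Your reading of ``selected for movement'' as the state produced by the election, prior to any possible surrender in the following round, is also a useful clarification that the paper does not spell out. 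In short: same approach, but your version is the more careful one.
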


\begin{proof}
Line~\ref{line:WtoL} selects one element $a_j$ from each $O_i^{(t)}$ where $a_j \in C_i$ for membership in $L_i^{(t)}$.  The robot selected can be a member of one of two sets, $a_j \in L_i^{(t)}$ or $a_j \in W_i^{(t)}$.  If $a_j \in L_i^{(t)}$, then $L_i^{(t+1)} \not = \{\emptyset\}$, because $a_j$ will remain in $L_i^{(t)}$. And if $a_j \in W_i^{(t)}$, then because $a_j \not = \emptyset$ in the next time step, $L_i^{(t+1)} = L_i^{(t)} \union \{a_j\}$, which means $L_i^{(t+1)} \not = \{\emptyset\}$.  Therefore, we can conclude that if $W_i^{(t)} \not = \{\emptyset\}$, then in the next time step $L_i^{(t+1)} \not = \{\emptyset\}$.
\end{proof}

\begin{lemma}
{\em{coordinatePath}} deadlocks when no robots are allowed to move and there are no robots available to be selected for movement. This occurs when $L_i^{(t)} = W_i^{(t)} = W_i^{(t-1)} = \{\emptyset\}$, and $C_i^{(t)} \not = \{\emptyset\}$.
\label{lemma:deadlock}
\end{lemma}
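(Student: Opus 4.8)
The plan is to show that the informal notion of deadlock --- no robot allowed to move and none selectable for movement --- is captured exactly by the stated set equalities, and then to argue that this configuration is a fixed point of \emph{coordinatePath} from which no robot can make progress. First I would translate the two informal conditions into the set language. The robots ``allowed to move'' are precisely those in $L_i^{(t)}$ (the winners holding the right to move), so ``no robot allowed to move'' is $L_i^{(t)} = \{\emptyset\}$. A robot can become allowed to move only by being chosen at line~\ref{line:WtoL}, which draws its candidate from the waiting set, so ``no robot available to be selected'' is $W_i^{(t)} = \{\emptyset\}$. Using the partition property $C_i^{(t)} = W_i^{(t)} \cup L_i^{(t)} \cup S_i^{(t)}$ together with $W_i^{(t)} = L_i^{(t)} = \{\emptyset\}$ and the standing requirement that robots still be in proximity, $C_i^{(t)} \neq \{\emptyset\}$, I obtain $S_i^{(t)} = C_i^{(t)} \neq \{\emptyset\}$: every robot in the collision shape has surrendered its right to move.

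Next I would obtain the remaining condition $W_i^{(t-1)} = \{\emptyset\}$ directly from Lemma~\ref{lemma:emptyL} rather than as an independent hypothesis. Its contrapositive (reindexed to $t-1$) states that $L_i^{(t)} = \{\emptyset\}$ forces $W_i^{(t-1)} = \{\emptyset\}$; equivalently, had $W_i^{(t-1)}$ been non-empty, Lemma~\ref{lemma:emptyL} would have guaranteed a winner $L_i^{(t)} \neq \{\emptyset\}$ and the robots would not be stalled. This both establishes consistency of the three equalities and explains why $W_i^{(t-1)} = \{\emptyset\}$ is exactly the condition that rules out the benign ``one robot is still moving'' case.

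The core of the argument is a persistence step showing the configuration reproduces itself. With $L_i^{(t)} = \{\emptyset\}$ there is no winner, so no robot executes a move, and, since the winner token is released only by a moving winner leaving its collision circle (line~\ref{line:LtoX}), the token is not released. The sole transition that repopulates $W$ from the surrendered set $S$ (line~\ref{line:StoW}) is gated on that release, so it cannot fire; and with $W_i^{(t)} = \{\emptyset\}$ the selection at line~\ref{line:WtoL} places nothing into $L_i^{(t+1)}$. Hence $W_i^{(t+1)} = L_i^{(t+1)} = \{\emptyset\}$ and $S_i^{(t+1)} = C_i^{(t+1)} \neq \{\emptyset\}$, and by induction the state persists, which is the claimed deadlock.

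I expect the main obstacle to lie precisely in this persistence step: one must argue carefully from the token and priority semantics of Algorithm~\ref{algo_collision_avoidance} that no escape transition is enabled once every robot sits in $S_i^{(t)}$. The one genuine escape is the joint-planning branch at line~\ref{line:jointPlanning}, which becomes reachable exactly because all priorities are now $\infty$ (i.e.\ $S_i^{(t)} = C_i^{(t)}$) and which, on success, resets priorities at line~\ref{line:jointPathFound}. I would therefore flag that a truly permanent deadlock additionally requires that this joint plan fail, and defer the resolution of that case to the subsequent deadlock-freedom result, so that this lemma stands purely as the characterization of the stalled configuration.
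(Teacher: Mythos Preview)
Your core argument---translating ``no robot allowed to move'' and ``none selectable'' into $L_i^{(t)}=W_i^{(t)}=\{\emptyset\}$, and then invoking the contrapositive of Lemma~\ref{lemma:emptyL} to obtain $W_i^{(t-1)}=\{\emptyset\}$---is exactly what the paper does. The paper's proof is shorter and runs in only one direction: it takes deadlock as the hypothesis (formalized as $\exists\, t'$ with $L_i^{(t)}=\{\emptyset\}$ for all $t\geq t'$), applies Lemma~\ref{lemma:emptyL} to get $W_i^{(t)}=\{\emptyset\}$ for all $t\geq t'-1$, and stops, treating the lemma purely as a necessary-condition characterization. Your persistence step, arguing that the configuration reproduces itself, goes beyond what the paper proves here; and your own caveat pinpoints why the paper omits it: the joint-planning branch at line~\ref{line:jointPlanning} is enabled precisely when $S_i^{(t)}=C_i^{(t)}$, so the state is \emph{not} a fixed point of the full algorithm. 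The paper defers that observation to the subsequent no-deadlock theorem, which is exactly where you propose to defer it as well. In short, your proof is correct and shares the paper's key step; the extra persistence discussion is sound once qualified, and effectively previews the next theorem rather than being needed for this lemma.
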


\begin{proof}
As, $C_i^{(t)} = \{\emptyset\}$ is the termination case, $C_i^{(t)} \not = \{\emptyset\}$ is a necessary condition. When the robots deadlock, that means that for the rest of time, no robots are allowed to move.  Or, mathematically, $\exists t'$ such that $L_i^{(t)} = \{\emptyset\}$ $\forall t \geq t'$. By Lemma~\ref{lemma:emptyL}, this implies that $W_i^{(t)} = \{\emptyset\}$ $\forall t \geq t'-1$.  This implies that for each round $t$ that is in deadlock $L_i^{(t)} = W_i^{(t)} = W_i^{(t-1)} = \{\emptyset\}$. 
\end{proof}

\begin{theorem}
  Algorithm coordinatePath does not deadlock.
\end{theorem}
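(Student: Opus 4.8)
The plan is to argue by contradiction, leaning on the two preceding lemmas and on the set-partition structure of $C_i^{(t)}$. Suppose \emph{coordinatePath} deadlocks. By Lemma~\ref{lemma:deadlock}, there is a round $t$ at which $L_i^{(t)} = W_i^{(t)} = W_i^{(t-1)} = \{\emptyset\}$ while $C_i^{(t)} \neq \{\emptyset\}$, and the deadlock condition $L_i^{(\cdot)} = \{\emptyset\}$ persists for all subsequent rounds. The goal is to exhibit a round in which a robot is again permitted to move, contradicting this persistence.

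First I would use the partition property $W_i^{(t)} \cup L_i^{(t)} \cup S_i^{(t)} = C_i^{(t)}$ together with $W_i^{(t)} = L_i^{(t)} = \{\emptyset\}$ to conclude $C_i^{(t)} = S_i^{(t)} \neq \{\emptyset\}$. That is, at a deadlocked round every robot in the collision shape has surrendered its right to move, which by the algorithm means every such robot's priority has been set to $\infty$ (line~\ref{line:LtoS}). This is the bridge from the set-theoretic characterization of deadlock to a concrete guard condition in the code.

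Next I would show this situation triggers the joint-planning branch. Because all priorities inside the collision shape equal $\infty$, the test on line~\ref{line:jointPlanning} succeeds and the robots invoke performJointPlanning, which searches the joint configuration space of the (finitely many) robots in the shape. If a collision-free joint maneuver exists, line~\ref{line:jointPathFound} resets every robot's priority back to its robot id, which is exactly the $S \to W$ transition realized by line~\ref{line:StoW}; hence $W_i^{(t+1)} \neq \{\emptyset\}$. Applying Lemma~\ref{lemma:emptyL} then yields $L_i^{(t+2)} \neq \{\emptyset\}$, so at least one robot is once again allowed to move, contradicting the assumed persistence of the deadlock condition. Therefore no deadlock can occur.

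The remaining case, in which performJointPlanning returns null, I would handle separately: here the algorithm executes \emph{exit FAILURE} and halts, so it does not wait indefinitely and thus does not meet the deadlock characterization of Lemma~\ref{lemma:deadlock}; I would additionally argue that in a bounded environment with non-empty free space a collision-free joint maneuver always exists for the finitely many robots in a shape, so this branch is effectively unreachable. The main obstacle I anticipate is precisely this last step --- correctly classifying the performJointPlanning failure case --- because the no-deadlock guarantee ultimately rests on the completeness of the joint planner, and care is needed to distinguish a clean termination (exit FAILURE) from an indefinite wait, which is the only behavior Lemma~\ref{lemma:deadlock} labels as a deadlock.
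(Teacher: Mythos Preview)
Your proposal is correct and follows essentially the same route as the paper's proof: invoke Lemma~\ref{lemma:deadlock}, use the partition $C_i^{(t)} = W_i^{(t)} \cup L_i^{(t)} \cup S_i^{(t)}$ to conclude $C_i^{(t)} = S_i^{(t)}$, observe that all-$\infty$ priorities trigger the joint-planning branch at line~\ref{line:jointPlanning}, and argue that the resulting reset at line~\ref{line:jointPathFound} repopulates $W_i^{(t+1)}$ so the deadlock condition no longer holds. Your version is slightly more explicit than the paper's in two places---you chain on Lemma~\ref{lemma:emptyL} to push the contradiction all the way to $L_i^{(t+2)} \neq \{\emptyset\}$, and you separate out the \emph{exit FAILURE} case---but these are refinements of the same argument, not a different approach.
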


We analyse the operation of the collision avoidance algorithm as a method to move robots between two sets, $C_i$ and $X_i$. Initially, all robots in $C_i$ are placed in $W_i$, and from $W_i$, movements through $S_i$ and $L_i$ are possible until the movement into $X_i$ is possible. Based on the above descriptions, a deadlock can only occur when no robot is allowed to move, meaning that no winners have been selected, and there are no available waiting robots to become winners.

\omitit{Robots  start in $W_i$ and by moving through other sets, eventually are placed in $X_i$.} 

\begin{proof}
By Lemma~\ref{lemma:deadlock}, {\em{coordinatePath}} deadlocks when $L_i^{(t)} = W_i^{(t)} = W_i^{(t-1)} = \{\emptyset\}$ and $C_i^{(t)} \not = \{\emptyset\}$.  As $C_i^{(t)} \not = \{\emptyset\}$, and $C_i^{(t)} = L_i^{(t)} \union W_i^{(t)} \union S_i^{(t)}$, this implies that $C_i^{(t)} = S_i^{(t)}$. Line~\ref{line:jointPlanning} tests for this condition when all robots remaining in the collision shape have a priority of $\infty$. This causes joint configuration space planning to be called, which is guaranteed to find   paths for all robots, if such paths exist, or a failure when paths do not exist, but does not deadlock.  If paths do exist, all of the robots in the suspended state $S_i^{(t)}$ are transitioned into the waiting state $W_i^{(t+1)}$, on line~\ref{line:jointPathFound}. In other words $S_i^{(t+1)} = \{\emptyset\}$ and $W_i^{(t+1)} = S_i^{(t)}$.  The conditions for deadlock given in Lemma~\ref{lemma:deadlock} no longer holds as $W_i^{(t+1)} \not = \{\emptyset\}$.
\end{proof}

\begin{figure}
  \centering
  \includegraphics[width=0.25\textwidth]{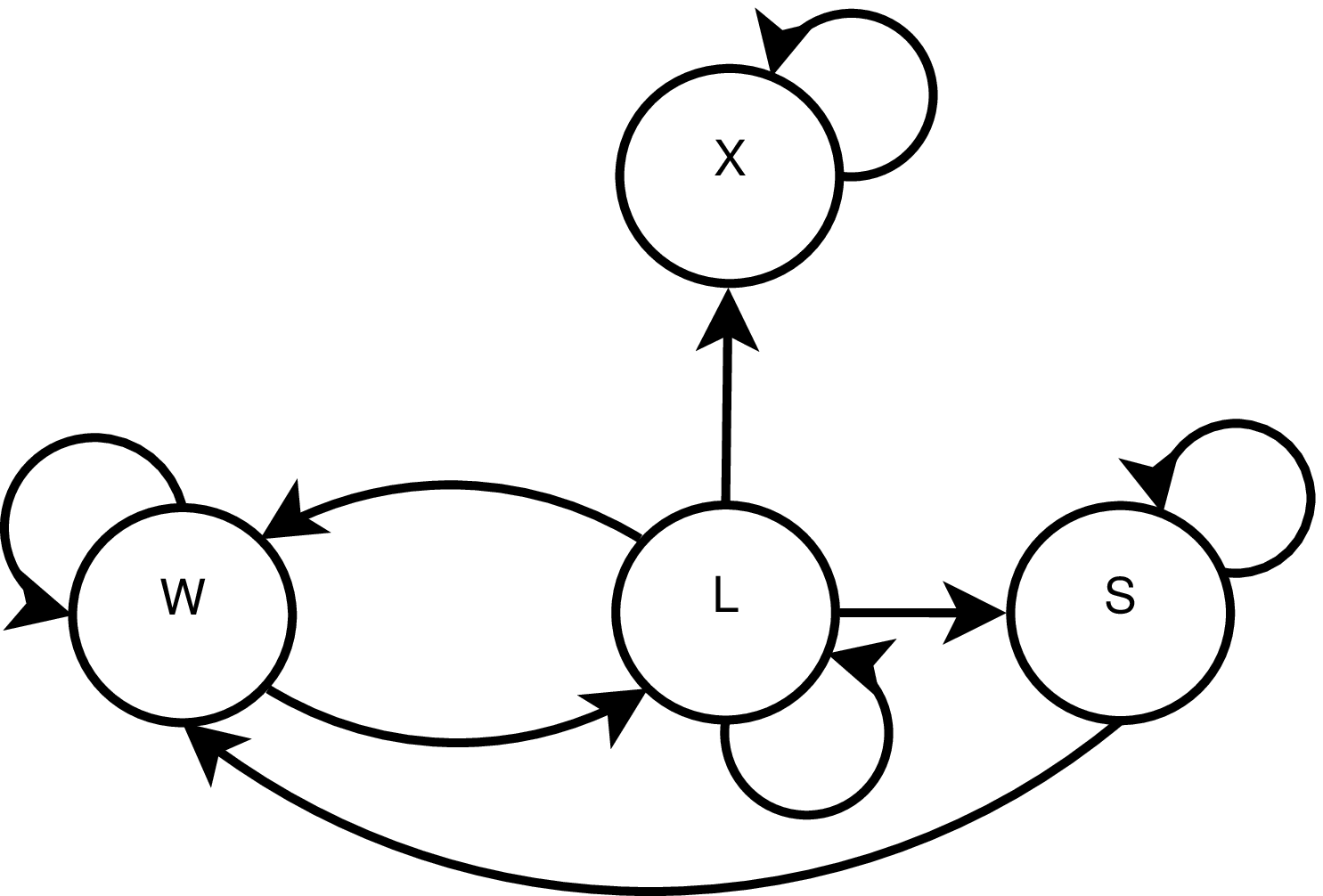}
  \caption{State diagram for Algorithm {\em{coordinatePath}}.}
  \label{fig:coordPathStateDiag}
\end{figure}

\begin{definition}
  A livelock occurs for algorithm {\em{coordinatePath}} when there is
  no robot in the collision shape that is able to reach the sink state
  X through a finite set of state transitions across the W, L, and S
  states.
\end{definition}

\begin{lemma}
\label{lemma:livelockCycleS}
Any cycle in the state transition graph that contains state $S$ cannot be 
a livelocking cycle
\end{lemma}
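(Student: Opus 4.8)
The plan is to exploit the causal dependency between edges of the state-transition graph: I will show that every traversal of the $S \to W$ edge is triggered by some robot taking the $L \to X$ edge (reaching the sink), so a cycle through $S$ forces a robot into $X$ and therefore cannot be livelocking. First I would characterize the shape of such a cycle. From the state-transition graph (Figure~\ref{fig:coordPathStateDiag}), state $S$ has a single incoming edge $L \to S$ (line~\ref{line:LtoS}) and a single outgoing edge $S \to W$ (line~\ref{line:StoW}); hence every cycle containing $S$ must traverse the $S \to W$ edge on each loop.

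Next I would pin down the algorithmic precondition for the $S \to W$ edge. The transition on line~\ref{line:StoW} fires only under the guard ``previous winner token released.'' The winner token is released on line~\ref{line:LtoX}, which executes exactly when the current winner is outside its collision circle $O_i$ --- and this is precisely the $L \to X$ transition, by which a robot leaves the collision shape and enters the sink $X$. Therefore a robot can move from $S$ back to $W$ only after some robot has completed the $L \to X$ transition, i.e., has reached $X$.

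I would then close by contradiction. Suppose a cycle $\mathcal{C}$ containing $S$ is livelocking. By definition, no robot in the collision shape ever reaches $X$. But repeatedly traversing $\mathcal{C}$ requires repeatedly traversing its $S \to W$ edge, which by the previous step requires some robot to have taken the $L \to X$ edge and reached $X$ --- contradicting the livelock assumption. Hence $\mathcal{C}$ cannot be livelocking.

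I expect the main obstacle to be the second mechanism for reaching $W$ from $S$, together with the circle-versus-shape distinction. The $S \to W$ transition can also occur through the joint-planning branch (lines~\ref{line:jointPlanning}--\ref{line:jointPathFound}), invoked precisely when every robot in the collision shape is surrendered ($C_i^{(t)} = S_i^{(t)}$, as in the deadlock theorem). For that branch I would invoke the guarantee that joint configuration-space planning returns collision-free paths for all robots whenever such paths exist --- so every robot obtains a valid path and can reach $X$ --- or else returns failure and terminates without cycling; either outcome is incompatible with a sustained livelock. The remaining delicacy is the gap between ``outside collision circle $O_i$'' (the token-release guard) and ``outside collision shape $C_i$'' (membership in $X$); I would reconcile these by noting that the model attaches the $L \to X$ label to the token-release line, so exiting the collision circle is exactly what counts as reaching the sink state $X$.
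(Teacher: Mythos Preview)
Your proposal is correct and follows essentially the same approach as the paper: both arguments hinge on the observation that the $S \to W$ transition is gated by a winner-token release, which is exactly the $L \to X$ event, so traversing a cycle through $S$ forces some robot into $X$; both also handle the all-in-$S$ case via the joint-planning fallback. Your version is somewhat more explicit---you separately flag the joint-planning branch and the collision-circle-versus-collision-shape gap---whereas the paper's proof treats these points more tersely, but the structure and key ideas are the same.
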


\begin{proof}
Once a robot enters state $S$, there are only two scenarios, stay in $S$ or exit state $S$ by going only to state $W$.  In the first case, there will come a time where either all robots have entered $S$. This triggers a joint planning, which is guaranteed to not result in livelock.  In the second case, the only way a robot in $S$ is allowed to reach $W$ is if another robot makes a transition from $L$ to $X$. This implies, from the definition of livelock, that algorithm {\em{coordinatePath}} is not in livelock, because robots outside of the coordinate path algorithm can make progress towards their tasks uninhibited.
\end{proof}

\begin{lemma}
\label{lemma:finiteCycles}
There are only a finite number of cycles or paths through the state graph in Figure~\ref{fig:coordPathStateDiag} that could feasibly result in a livelock scenario
\end{lemma}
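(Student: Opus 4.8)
The plan is to exploit the finiteness of the state transition graph in Figure~\ref{fig:coordPathStateDiag}. By definition, a livelock is a non-terminating execution in which no robot in the collision shape ever reaches the sink state $X$; equivalently, it is an infinite sequence of state transitions confined to the states $W$, $L$, and $S$. Since the state set $\{W, L, S, X\}$ is finite, any such infinite sequence must, by the pigeonhole principle, revisit some state infinitely often, and hence must traverse some cycle of the state graph infinitely often. Thus every candidate livelock is witnessed by at least one cycle through the permitted transitions, and it suffices to bound the number of cycles (and the finite paths) that could sustain such behavior.

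First I would read the permitted transitions directly off Figure~\ref{fig:coordPathStateDiag} and the corresponding algorithm lines: $W \to L$ (line~\ref{line:WtoL}), $L \to S$ (line~\ref{line:LtoS}), $L \to W$ (line~\ref{line:LtoW}), $L \to X$ (line~\ref{line:LtoX}), and $S \to W$ (line~\ref{line:StoW}). Next I would observe that $X$ is a sink state with no outgoing transitions, so it can never lie on a cycle; any path that reaches $X$ is therefore finite and, by the definition of livelock, corresponds to termination rather than to a livelock. I would then invoke Lemma~\ref{lemma:livelockCycleS}, which already rules out every cycle containing $S$ as a livelocking cycle. Removing $X$ (which cannot appear on any cycle) and $S$ (which cannot sustain a livelock) from consideration leaves only cyclic behavior among the two remaining states $W$ and $L$. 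Because the underlying graph is finite, its collection of simple cycles is finite; after discarding those eliminated above, essentially only the cycle $W \to L \to W$ survives as a feasible livelock candidate, so the number of cycles or paths that could feasibly result in a livelock is finite.

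The main obstacle will be establishing the correspondence cleanly: I must argue that a livelock is \emph{equivalent} to the existence of a non-terminating cyclic execution among the permitted transitions, and not merely to the presence of a cycle somewhere in the graph. Once this equivalence is in place, finiteness follows immediately from the finiteness of the state set together with Lemma~\ref{lemma:livelockCycleS}, and the remaining effort is the routine bookkeeping of checking that every permitted transition has been accounted for and that self-loops or repeated states introduce no additional, previously uncounted cyclic structures. This enumeration then yields the short, finite list of cyclic candidates that the subsequent analysis must inspect in order to conclude that \emph{coordinatePath} does not livelock.
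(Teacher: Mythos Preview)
Your approach is essentially the same as the paper's: invoke Lemma~\ref{lemma:livelockCycleS} to discard any cycle through $S$, note that $X$ is a sink and cannot lie on a cycle, and conclude that only the two remaining states $W$ and $L$ can support a livelocking cycle, whence finiteness follows from the finiteness of the state graph. Your added formal scaffolding (pigeonhole, the explicit equivalence between a livelock and an eventually cyclic execution) is more than the paper supplies, but is in the same spirit.

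One discrepancy worth flagging: your final enumeration lands on a single candidate, $W\to L\to W$, and you explicitly dismiss self-loops as not introducing new cyclic structures. The paper, by contrast, enumerates \emph{three} candidates---staying in $W$, staying in $L$, and cycling $W\to L$---and the subsequent lemmas (\ref{lemma:stayInL}, \ref{lemma:cycleWL}, \ref{lemma:stayInW}) and the livelock theorem are organized around exactly this three-way split. A robot can remain in $W$ across rounds (never selected) or remain in $L$ across rounds (still moving inside the shape), so these self-loops are genuine behaviors in the algorithm, not artifacts to be collapsed away. For the bare statement of the lemma (finiteness) your argument suffices, but to line up with the downstream proofs you should retain all three candidates rather than reducing to one.
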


\begin{proof}
Based on Lemma~\ref{lemma:livelockCycleS}, any cycle that moves through $S$ can be eliminated from the list, which leaves only two possible states to move through. Then based on the properties of finite state machines, all cycles which result in multiple visits of the same state in a row can be collapsed into the equivalent cycle visiting that state only once.  As such, this leaves us with only three potential cycles, $W$, $L$, and $W \rightarrow L$.
\end{proof}

\begin{lemma}
\label{lemma:stayInL}
It is not possible for robots to eternally stay in state $L$.
\end{lemma}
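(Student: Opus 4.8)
The plan is to show that any robot occupying state $L$ must leave it after a finite number of rounds, so the self-loop at $L$ singled out in Lemma~\ref{lemma:finiteCycles} cannot sustain a livelock. First I would recall that a robot enters $L$ only as the \emph{unique} winner of the bully election on line~\ref{line:WtoL}, and that while it holds the winner token every other robot in its collision shape is stationary and is treated as a static obstacle by its PRM planner (the comment accompanying the call to \emph{updateTRG}). Consequently, from the winner's viewpoint the local environment is static, so the collision-free path returned for its current destination $v'$ is a fixed, finite-length path.

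Next I would establish monotone progress. On every round spent in $L$ in which a valid path exists, the winner executes the \emph{Move along current segment of path} branch, advancing a positive distance along this finite, fixed path. Because the remaining path length is bounded above and the per-round displacement is bounded below, after finitely many rounds the robot traverses the entire remaining path toward $v'$. Two outcomes then force an exit from $L$: either the robot crosses the boundary of its collision circle $O_i$ (of radius $r_{coll}$ around the stationary robots), whereupon it releases the winner token and transitions $L \to X$ on line~\ref{line:LtoX}; or a replan during its advance returns $v' = null$, whereupon it sets its priority to $\infty$ and transitions $L \to S$ on line~\ref{line:LtoS}. In either case the robot leaves $L$ in finite time, so I would conclude that a robot cannot remain in $L$ eternally.

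The step I expect to be the main obstacle is ruling out the degenerate possibility that $v'$ itself lies \emph{inside} the collision circle, so that the robot reaches its destination without ever crossing the boundary and thereby appears to keep re-entering $L$. Here I would argue that reaching $v'$ triggers task completion, removal of that TRG vertex, and selection of a new destination via the updated MDP policy; since the task set $V$ is finite, this can recur at most $|V|$ times before either no task remains (so \emph{updateTRG} returns null and the robot transitions to $S$) or the next destination lies outside $O_i$ (so continued motion crosses the boundary and the robot transitions to $X$). Combining the finite path-length bound with this finite bound on successive in-circle task completions gives a finite total number of rounds in $L$, which, together with Lemma~\ref{lemma:livelockCycleS}, completes the classification of non-livelocking cycles begun in Lemma~\ref{lemma:finiteCycles}.
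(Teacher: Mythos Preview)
Your argument is more detailed than the paper's, but it rests on a premise that is not quite right: you assert that while a robot is in $L$ every other robot in its collision \emph{shape} is stationary. In fact, the bully election on line~\ref{line:WtoL} selects a winner per collision \emph{circle} $O_i$, not per collision shape $C_i$, so a single collision shape can host several robots in $L$ simultaneously (cf.\ the proof of Lemma~\ref{lemma:emptyL}, which selects one winner from each $O_i$). Moreover, even if the other robots were all stationary, as the winner moves it may drift within $r_{coll}$ of a higher-priority robot that was not previously in its $O_i$; this triggers the token transfer on line~\ref{line:LtoW}, i.e.\ the transition $L\to W$. Your enumeration of exits from $L$ therefore omits a case that the paper's proof keeps explicitly: the paper argues that a robot in $L$ must keep moving along the PRM shortest path and hence either leaves the collision shape ($L\to X$) or ``enters a collision with another robot,'' which sends it to $W$ (higher-priority robot encountered) or to $S$ (all feasible paths now blocked). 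Your monotone-progress reasoning survives once this third exit is added, since $L\to W$ also removes the robot from $L$ in finite time; but the ``static local environment / fixed path'' justification you give is not generally available, and the clean lower bound on per-round displacement along a fixed path does not straightforwardly hold when other winners may be moving and forcing replans.

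On the other hand, your treatment of the degenerate case where the destination $v'$ lies inside the collision circle---bounding successive in-circle task completions by $|V|$ before \emph{updateTRG} must return null---is a genuine refinement that the paper's short proof does not address.
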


\begin{proof}
All robots in $L$ must remain moving and may not remain stopped for any time not required for planning or sensing. The paths returned by PRM are the shortest path in the roadmap to the goal possition.  Based on these two facts, the robot must either move out of the collision shape, or enter a collision with another robot causing its transition either into $W$ or $S$ depending on if the collision was with a higher priority robot, or if the collision caused all feasible paths for the robot to become blocked.
\end{proof}

\begin{lemma}
\label{lemma:cycleWL}
It is not possible for robots to eternally cycle between sets $W$ to $L$ and back to $W$.
\end{lemma}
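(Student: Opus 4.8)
The plan is to argue by contradiction, centering the whole argument on the robot of globally highest priority in the collision shape. Suppose, contrary to the claim, that the robots can eternally cycle between $W$ and $L$. By Lemma~\ref{lemma:finiteCycles} the $W \rightarrow L$ cycle is the only remaining candidate for a livelock, and by Lemma~\ref{lemma:stayInL} no robot can remain in $L$ indefinitely, so an eternal cycle of this form would have to consist of infinitely many alternations $W \rightarrow L$ and $L \rightarrow W$ in which no robot ever enters $S$ or the sink $X$. I would first note that transitions through $S$ are already excluded as livelocking by Lemma~\ref{lemma:livelockCycleS}, so the scenario I must rule out is a purely $W \leftrightarrow L$ cycle.

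First I would isolate the two transition rules that drive this cycle. The transition $W \rightarrow L$ is produced by the bully election on line~\ref{line:WtoL}, which selects the robot with the lowest priority identifier in each collision circle as the winner; the transition $L \rightarrow W$ occurs only on line~\ref{line:LtoW}, that is, only when a moving robot is forced to yield because it collides with a strictly higher-priority (lower-id) robot.

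Second, and this is the key step, I would let $a_{min}$ be the robot with the lowest identifier in the collision shape. Because $a_{min}$ has the globally smallest id, it wins the bully election in every collision circle containing it, so whenever it is contending it is placed into $L$. Moreover, while $a_{min}$ is in $L$ it can never be pushed back into $W$: the only mechanism for the $L \rightarrow W$ transition (line~\ref{line:LtoW}) requires a strictly higher-priority robot, and no such robot exists in the shape. Hence the only ways $a_{min}$ can leave $L$ are to $X$ (line~\ref{line:LtoX}) or to $S$. By Lemma~\ref{lemma:stayInL} it cannot stay in $L$ forever, and by the standing assumption it never enters $S$; therefore $a_{min}$ must eventually reach $X$, leaving the collision shape, which contradicts the supposition of an eternal pure $W \leftrightarrow L$ cycle.

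Finally, I would close with a finiteness and monotonicity observation: each time the current lowest-id robot escapes to $X$ the collision shape loses a member, and since $R$ is finite and an empty shape is precisely the termination case, after finitely many such departures the shape empties, so no eternal $W \rightarrow L$ cycle can persist. The main obstacle I anticipate is justifying rigorously that $a_{min}$ is immune to the $L \rightarrow W$ transition — this hinges on reading line~\ref{line:LtoW} as requiring a \emph{strictly} higher-priority collision — and on cleanly combining Lemma~\ref{lemma:stayInL} with the exclusion of $S$ from Lemma~\ref{lemma:livelockCycleS} to force the exit of $a_{min}$ specifically into $X$.
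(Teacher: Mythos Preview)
Your proposal is correct and follows essentially the same argument as the paper: both identify the highest-priority (lowest-id) robot in the collision shape, observe that it cannot be demoted from $L$ back to $W$ since no strictly higher-priority robot exists, and conclude that this robot must eventually exit $L$ to either $S$ or $X$, breaking the $W\!-\!L\!-\!W$ cycle. The paper is slightly more direct in allowing either $S$ or $X$ as the loop-breaking exit rather than excluding $S$ via the surrounding lemmas, but the core idea is identical.
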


\begin{proof}
Transitions from $W$ to $L$ occur when a robot is the highest priority robot in its collision circle $O$. Transitions from $L$ back to $W$ occur when a robot loses its position as the highest priority robot in its collision circle, when it was moving. The robot would still be able to move towards its goal if the higher priority robot was not there. Also, consider the worst case scenario, where, as each robot selected from $W$ to get into $L$, moves and encounters a robot of higher priorty, causing it to transition from $L$ back to $W$.  Even assuming that $L$ contains only one robot at a time, as the set of robots is finite, and all robot ids are unique inside $W$ or $L$, there must exist at least one robot with highest priority, which, once entering $L$, can not be moved back to $W$ as there is no other higher prority robot to prevent its movement. This highest priority robot only has two destination states that it can reach, $S$ or $X$; either of which breaks the loop $W-L-W$.  Therefore, a scenario where a robot cycles between $W-L-W$ states cannot exist.
\end{proof}

\begin{lemma}
\label{lemma:stayInW}
It is not possible for robots to eternally stay in the state $W$
\end{lemma}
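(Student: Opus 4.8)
The plan is to argue by contradiction in the same spirit as Lemma~\ref{lemma:deadlock}, showing that an eternal residence in $W$ would force $L$ to remain empty forever, which directly collides with Lemma~\ref{lemma:emptyL}. First I would make precise what ``staying in $W$'' means in terms of the partition of $C_i^{(t)}$: if the robots of the collision shape remained in $W$ from some round $t'$ onward, then no waiter would ever be promoted, i.e. $L_i^{(t)} = \{\emptyset\}$ for every $t \geq t'$, while at the same time $W_i^{(t)} \not= \{\emptyset\}$ for those rounds (otherwise $C_i^{(t)} = \{\emptyset\}$ and the algorithm would have terminated rather than livelocked, using that $S$-containing cycles are already excluded by Lemma~\ref{lemma:livelockCycleS}).

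Second, I would invoke Lemma~\ref{lemma:emptyL} at round $t'$: since $W_i^{(t')} \not= \{\emptyset\}$, that lemma guarantees $L_i^{(t'+1)} \not= \{\emptyset\}$. This contradicts $L_i^{(t'+1)} = \{\emptyset\}$ obtained from the standing assumption. Hence the assumption that the collision shape is trapped in $W$ forever is untenable: every round in which at least one robot waits is necessarily followed by a round in which at least one robot is elected for movement, so $W$ cannot act as a terminal self-loop in the state graph of Figure~\ref{fig:coordPathStateDiag}.

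Third, to close the subtler reading in which one fixed robot waits forever while others churn, I would appeal to the determinism and finiteness of the bully selection on line~\ref{line:WtoL}. Because robot ids are unique and $C_i^{(t)}$ is finite, whenever $W_i^{(t)} \not= \{\emptyset\}$ a unique winner is promoted into $L$, and by Lemma~\ref{lemma:stayInL} that winner cannot remain in $L$ indefinitely, so it eventually vacates to $X$ (or surrenders to $S$, a case already handled by Lemma~\ref{lemma:livelockCycleS}). Each departure to $X$ strictly shrinks the shape and raises the priority rank of every remaining waiter; since the id set is finite, the number of robots that can ever outrank a given waiter is bounded, so that waiter eventually attains top priority and is itself promoted out of $W$.

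The main obstacle I anticipate lies in the third step rather than the first: the one-line contradiction with Lemma~\ref{lemma:emptyL} cleanly rules out the aggregate self-loop, but the fixed-robot case needs a monotonicity argument that the shape cannot keep refilling with higher-priority entrants faster than it empties. I would control this by observing that robots entering from outside are subject to the same finite, unique-id tie-breaking, bounding the set of robots that can ever preempt a given waiter and thereby guaranteeing its eventual promotion. With the $W$ self-loop excluded here, the $L$ self-loop excluded by Lemma~\ref{lemma:stayInL}, and the $W \rightarrow L \rightarrow W$ oscillation excluded by Lemma~\ref{lemma:cycleWL}, the three candidate cycles of Lemma~\ref{lemma:finiteCycles} are all eliminated, which is exactly what the livelock-freedom argument requires.
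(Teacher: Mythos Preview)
Your argument is correct, but it is organized quite differently from the paper's. The paper does not invoke Lemma~\ref{lemma:emptyL} at all and does not separate the aggregate case from the fixed-robot case. Instead it gives a single direct argument from the mechanics of the algorithm: a robot can remain in $W$ only because some higher-priority robot in its collision circle is currently in $L$; but every member of $L$ must keep moving along a PRM path, and therefore must eventually either exit the shape (transition to $X$) or fail to find a feasible path (transition to $S$). Either way the occupant of $L$ that was pinning the waiter disappears, and the waiter is promoted. The paper also remarks up front that perpetual residence in $W$ is really a deadlock configuration rather than a livelock one.

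Your route buys a cleaner formal handling of the all-in-$W$ case via the one-line contradiction with Lemma~\ref{lemma:emptyL}, and your third step is more careful than the paper about the possibility of the shape refilling with higher-priority entrants (the paper's proof is silent on that point). The cost is length: what the paper does in a short paragraph you spread over three steps plus a monotonicity discussion. Both arguments ultimately rest on the same two facts---finiteness of the id set and the guarantee that $L$ occupants must vacate to $X$ or $S$---so neither is deeper than the other; yours is simply more explicit about the case split and the progress measure.
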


\begin{proof}
This would be considered a deadlock state.  For robots to stay in $W$, there must be a higher priority robot inside their collision circle that is selected to move to $L$.  All members of $L$ must remain in motion, otherwise they become members of $S$, and no longer the highest priority robot in the collision circle, causing a new robot
from $W$ to be selected.  Robots in $L$ can not remain perennially inside the collision circle as the path planner PRM generates a path to goal. If a feasible path is found by the planner, the robot will leave the collision circle by transitioning from $L$ to $X$.  On the other hand, if a feasible path cannot be determined, the robot enters
$S$.
\end{proof}

\begin{theorem}
Algorithm {\em{coordinatePath}} does not result in livelock.
\end{theorem}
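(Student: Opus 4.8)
The plan is to treat a livelock as an infinite trajectory in the state transition graph of Figure~\ref{fig:coordPathStateDiag} that never reaches the sink state $X$, and then to show that no such trajectory can exist by exhausting the finitely many candidate cycles among the states $W$, $L$, and $S$. By the definition of livelock, it suffices to prove that every robot in a collision shape reaches $X$ through a finite sequence of state transitions; equivalently, I would show that the state sequence of any robot cannot remain confined to $\{W, L, S\}$ forever.

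First I would appeal to Lemma~\ref{lemma:finiteCycles} to reduce the problem to a finite case analysis. That lemma, building on Lemma~\ref{lemma:livelockCycleS} (which removes every cycle passing through $S$) together with the standard collapsing of consecutively repeated states in a finite-state machine, guarantees that the only cycles that could conceivably sustain a livelock are the self-loop at $W$, the self-loop at $L$, and the two-state cycle $W \rightarrow L \rightarrow W$. Since establishing that these three are genuinely the complete list is the substantive part of the reduction, and this is exactly what Lemma~\ref{lemma:finiteCycles} supplies, the theorem reduces to refuting each of the three candidates in turn.

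I would then dispatch the candidates using the already-proved lemmas: Lemma~\ref{lemma:stayInL} rules out an eternal stay in $L$, since a robot in $L$ must keep moving and hence either exits to $X$ or collides and drops into $W$ or $S$; Lemma~\ref{lemma:stayInW} rules out an eternal stay in $W$, since a waiting robot is blocked only by a higher-priority mover, which itself cannot remain in $L$ indefinitely; and Lemma~\ref{lemma:cycleWL} rules out the oscillation $W \rightarrow L \rightarrow W$, since the unique highest-priority robot, once in $L$, can never be forced back to $W$ and must proceed to $S$ or $X$. Because every cycle that could trap a robot away from $X$ has been eliminated, every state trajectory reaches $X$ in finitely many transitions, so by the definition of livelock the algorithm cannot livelock.

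The step I expect to be the main obstacle is the reduction itself rather than the final bookkeeping: the whole argument hinges on Lemma~\ref{lemma:finiteCycles} correctly enumerating all potential livelocking cycles, which in turn rests on the claim of Lemma~\ref{lemma:livelockCycleS} that a cycle touching $S$ always coincides with some robot leaving the collision shape and hence with genuine global progress. Once that enumeration is accepted, the theorem is a clean combination of Lemmas~\ref{lemma:stayInL}, \ref{lemma:stayInW}, and~\ref{lemma:cycleWL}, and no further computation is needed.
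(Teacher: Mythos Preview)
Your proposal is correct and follows essentially the same route as the paper: reduce the possible livelocking cycles to the three candidates (stay in $W$, stay in $L$, and the $W\leftrightarrow L$ cycle) and then eliminate each via Lemmas~\ref{lemma:stayInW}, \ref{lemma:stayInL}, and~\ref{lemma:cycleWL}. The only minor difference is that you explicitly invoke Lemma~\ref{lemma:finiteCycles} (and through it Lemma~\ref{lemma:livelockCycleS}) to justify the enumeration, whereas the paper obtains the same list by direct inspection of the state diagram.
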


\begin{proof}
Livelock is when the robots are still able to move, however they make no progress towards completing their goals.  To do this, the robots must remain in states that allow motion, but never reach their goal locations (tasks), or exit their collision shapes.  Figure~\ref{fig:coordPathStateDiag} shows all of the possible transitions between states of the {\em{coordinatePath}} algorithm.  Through inspection of this figure, we can determine that livelock occurs when the robots follow any cycle in the graph without any robots entering state $X$. This provides the following livelock possibilities for the robots:
\begin{enumerate}
\item \label{item:W} Stay in $W$
\item \label{item:L} Stay in $L$
\item \label{item:WL} Cycle between states states $W$ and $L$
\end{enumerate}

Lemma~\ref{lemma:stayInL} shows that the option \ref{item:L} cannot exist.  Lemma~\ref{lemma:cycleWL} shows that option \ref{item:WL} cannot exist. And lemma~\ref{lemma:stayInW} shows that option \ref{item:W} cannot exist. Based on this, we can conclude that {\em{coordinatePath}} does not livelock.
\omitit{And lemma~\ref{lemma:cycleWLS} shows that option \ref{item:WLS} cannot exist.} 
\end{proof}

\section{Experimental Results}
\label{sec_expts}

\begin{table}
\centering
\caption{Mapping between environment parameters and Load values}
\label{tab:loadMapping}
\begin{tabular}{|l|l|l|l|}
\hline
\textbf{Tasks} & \textbf{Robots} & \textbf{Visits} & \textbf{Load} \\ \hline
5  & 3 & 1 & 1.67  \\ \hline
10 & 3 & 1 & 3.33  \\ \hline
5  & 1 & 1 & 5     \\ \hline
10 & 3 & 2 & 6.67  \\ \hline
10 & 1 & 1 & 10    \\ \hline
15 & 3 & 2 & 10    \\ \hline
15 & 1 & 1 & 15    \\ \hline
15 & 3 & 3 & 15    \\ \hline 
\end{tabular}
\end{table}

\begin{figure}
\centering
\begin{tabular}{c c}
\includegraphics[width=0.45\linewidth]{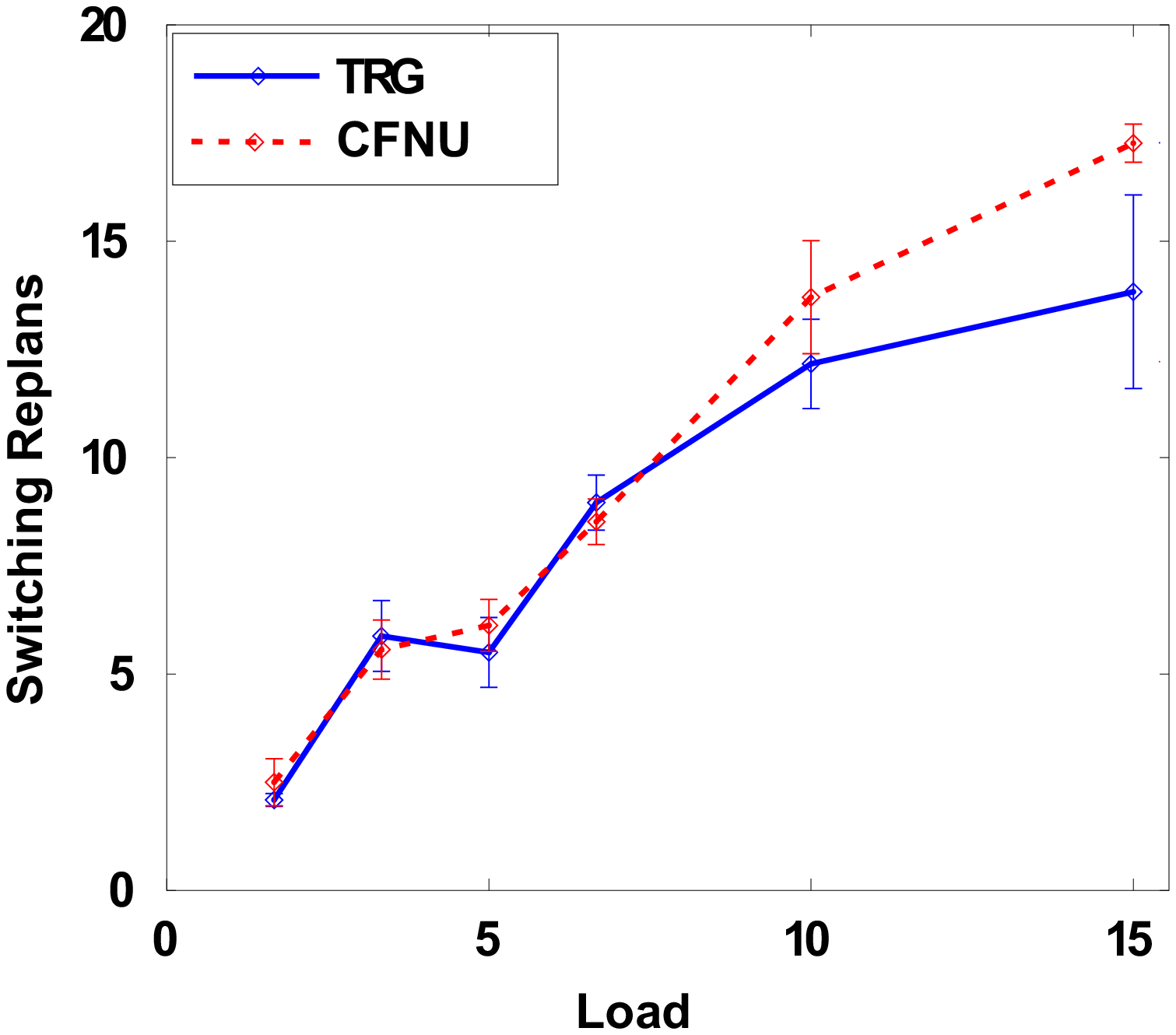} &
\includegraphics[width=0.45\linewidth]{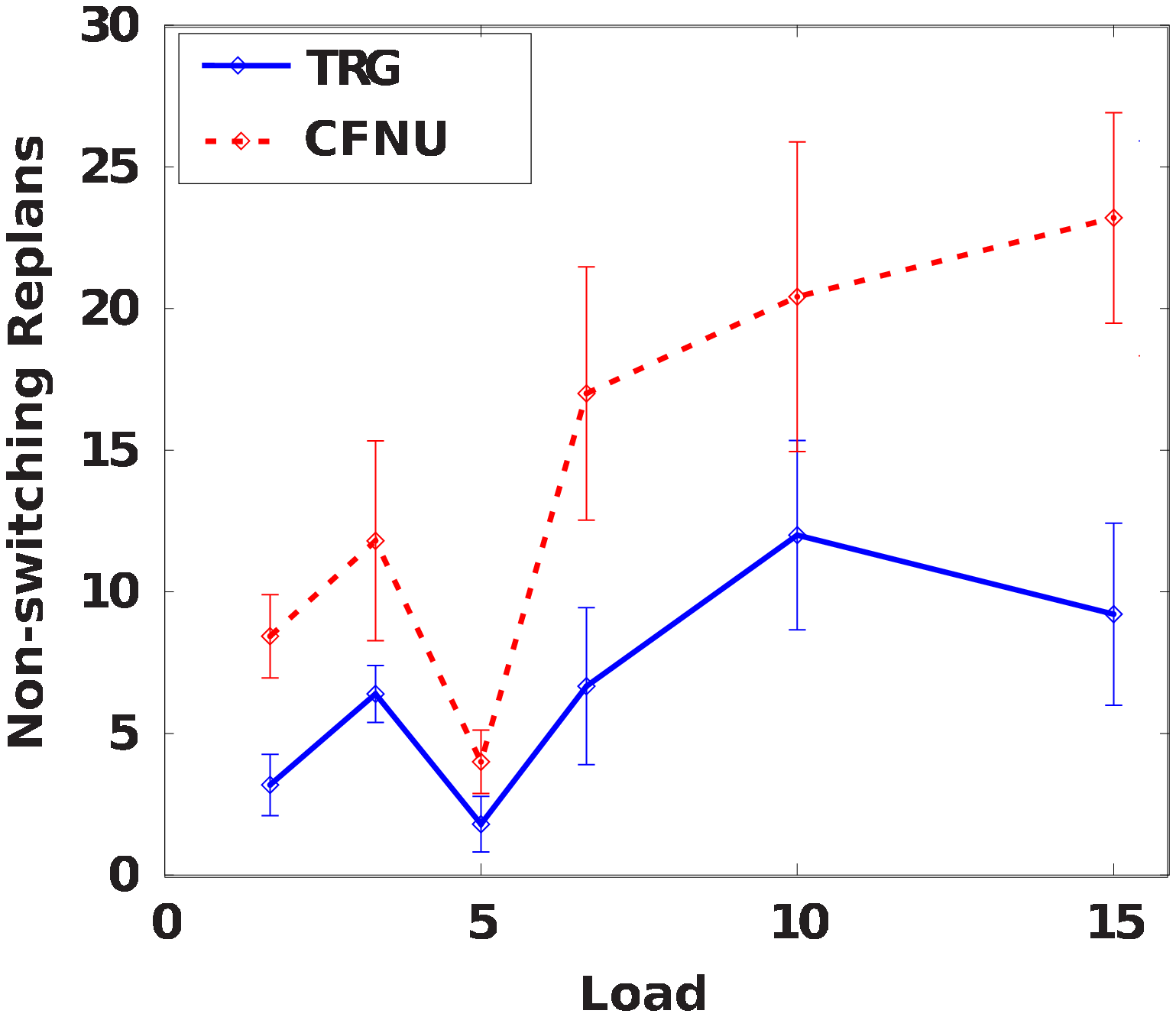}\\
{\small{(a)}} & {\small{(b)}} \\
\includegraphics[width=0.45\linewidth, height=1.4in]{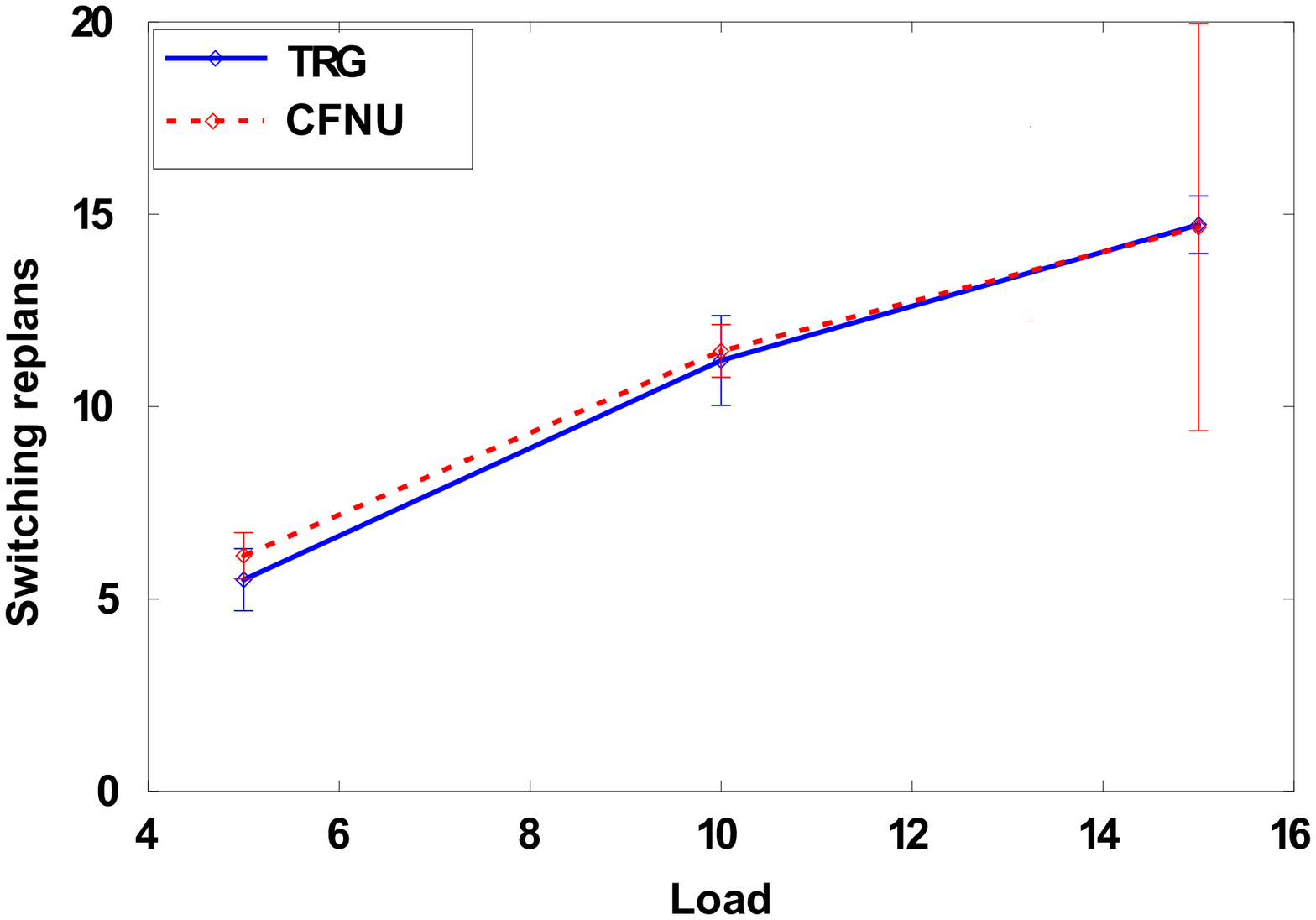}&
\includegraphics[width=0.45\linewidth, height=1.4in]{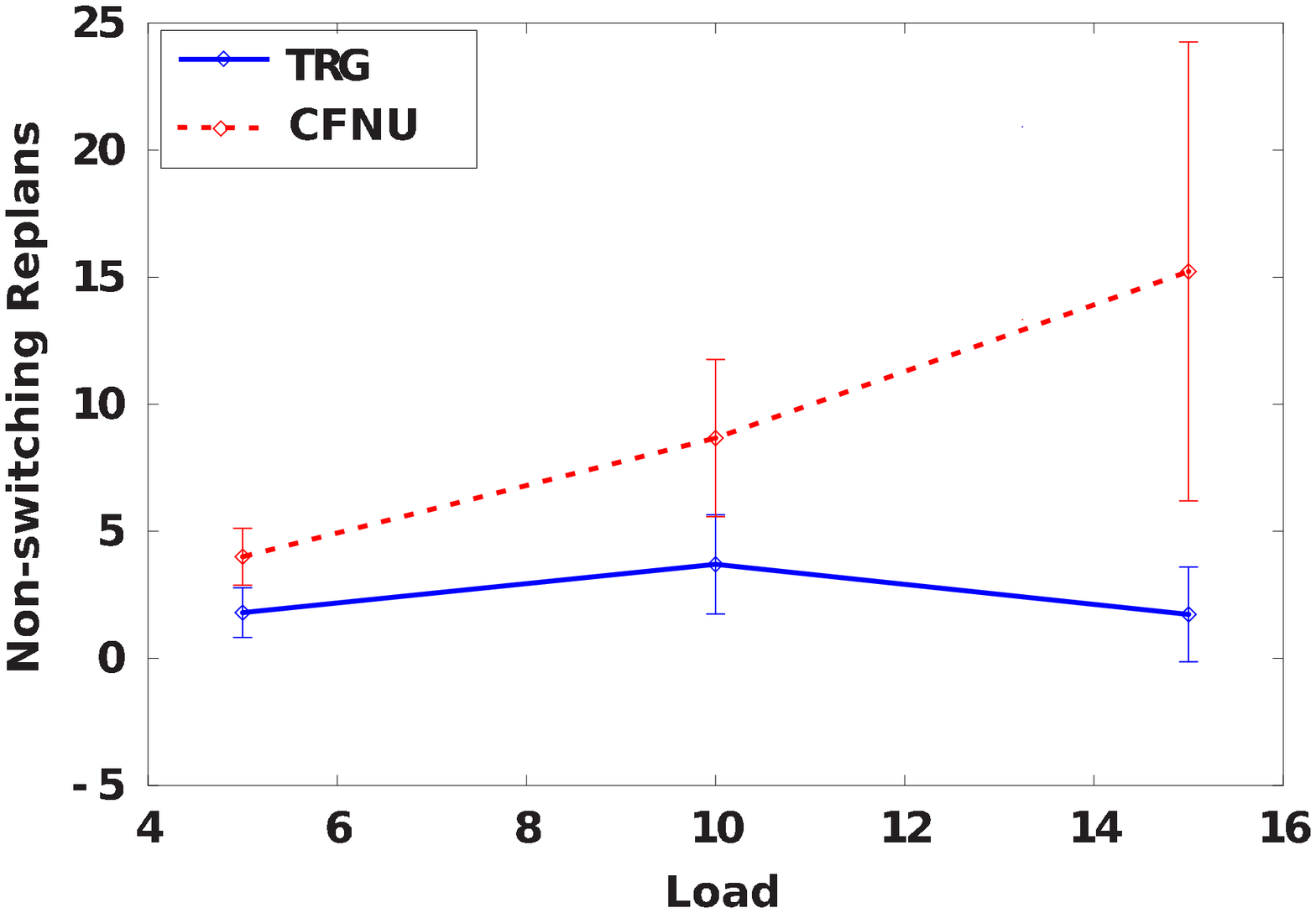} \\
{\small{(c)}} & {\small{(d)}} \\
\end{tabular}
\caption{Average number of replans by each robot which (a) results in
  a task switch, and (b) does not result in a task switch, using the
  TRG and CFNU approaches for different values of $T$, $R$ and
  $Vis$. Values on the x-axis are arranged in increasing orders for
  average robot task load $L$. (c) Switching replans single robot case 
  (d) Non-switching replans single robot case }
\label{fig_num_switches}
\end{figure}

To verify our approach, we have run several experiments on the Webots robot simulator using a model of the Coroware Corobot robot, as well as on physical Corobot robots. The
Corobot is a 4-wheel $30$cm $\times$ $30$cm, skid-steer robot which we control in a differential wheel manner. It is equipped with an indoor Stargazer localization system \cite{stargazer} which provides 2D location and heading, Hokuyo laser distance sensor which provides a $270\degree$ range up to $5$m in distance at a resolution of $\frac{1}{3}\degree$, and wireless communications.  The on-board computer has an 1.6GHz Intel Atom 330 processor with 4GB of RAM.  The simulator provides a Gaussian distributed noise to sensor and actuator motions. The environments have a size of $10 \times 10$ m$^2$; obstacles
were placed at different locations within the environment. The number of tasks ($T$) was varied over $5, 10$ and $15$. The task locations were selected in a way that if there were a single robot in the environment following a closest first task selection strategy, it would cause the robot to switch tasks for $50\%$ of the replans it does.  When there are multiple robots, one of them is selected arbitrarily and placed at a location that would effect the aforementioned $50\%$ task switching for replans. Any other robots are placed randomly in the environment, while keeping an even distribution.  For the different settings we have used, the robot positions are the same for all runs in an environment. The number of robots ($R$) used were $1$ and $3$ . For each task a certain number of robots had to visit it to complete it; the variable for visits per task ($Vis$) was varied over $1$, $2$, and $3$. To present our results in a concise manner, we have used a parameter representing each robot's average task load, given by $L =\frac{Vis \times T}{R}$. The different combinations of $T, R$ and $Vis$, and corresponding values of $L$ used for our experiments are given in Table \ref{tab:loadMapping}. User-defined constants were set to $\gamma=0.8$ (Discount rate of MDP in Equation \ref{eq:mdp}) and $\Gamma=1.5$ (PLL parameter in Equation \ref{eq:pll}), unless otherwise stated. Probability values used in the HMM were similar to those given in Figure \ref{fig:dbn}, which were based on average times for a robot for avoiding static and mobile obstacles within the environments used for the experiments. All results were averaged over $10$ simulation runs. We have compared the quality of task ordering performed by our proposed technique with an approach where the task ordering is done using CFNU~\cite{Woosley-FLAIRS-2013} - each robot selects the task that has the least cost on its TRG from its current location, without modeling uncertainties in the inter-task paths or updating TRG edge availabilities. The CFNU algorithm switches tasks when another task is closer to it than the currently selected task, as measured using straight line distance. To enable comparison, both task ordering approaches use the same underlying path planner and multi-robot coordination mechanism, when necessary.  We have reported three metrics for quantifying the performance of the algorithms - the distances traveled by the robots, the number of replans (with and without task switches) and the times (planning and locomotion) taken to visit all tasks. Also, to understand only the performance of our MDP-based task selection method, we have reported the  metrics separately for single robot scenarios, where $|R| = |Vis| = 1$. That is,there is only $1$ robot that has to visit all tasks once, and edge availability is affected only by previously unknown obstacles; effects due to communication uncertainties of {\em TaskComplete} messages sent by other robots do not arise.

\begin{figure}
\centering
\begin{tabular}{c c}
\includegraphics[width=0.45\linewidth]{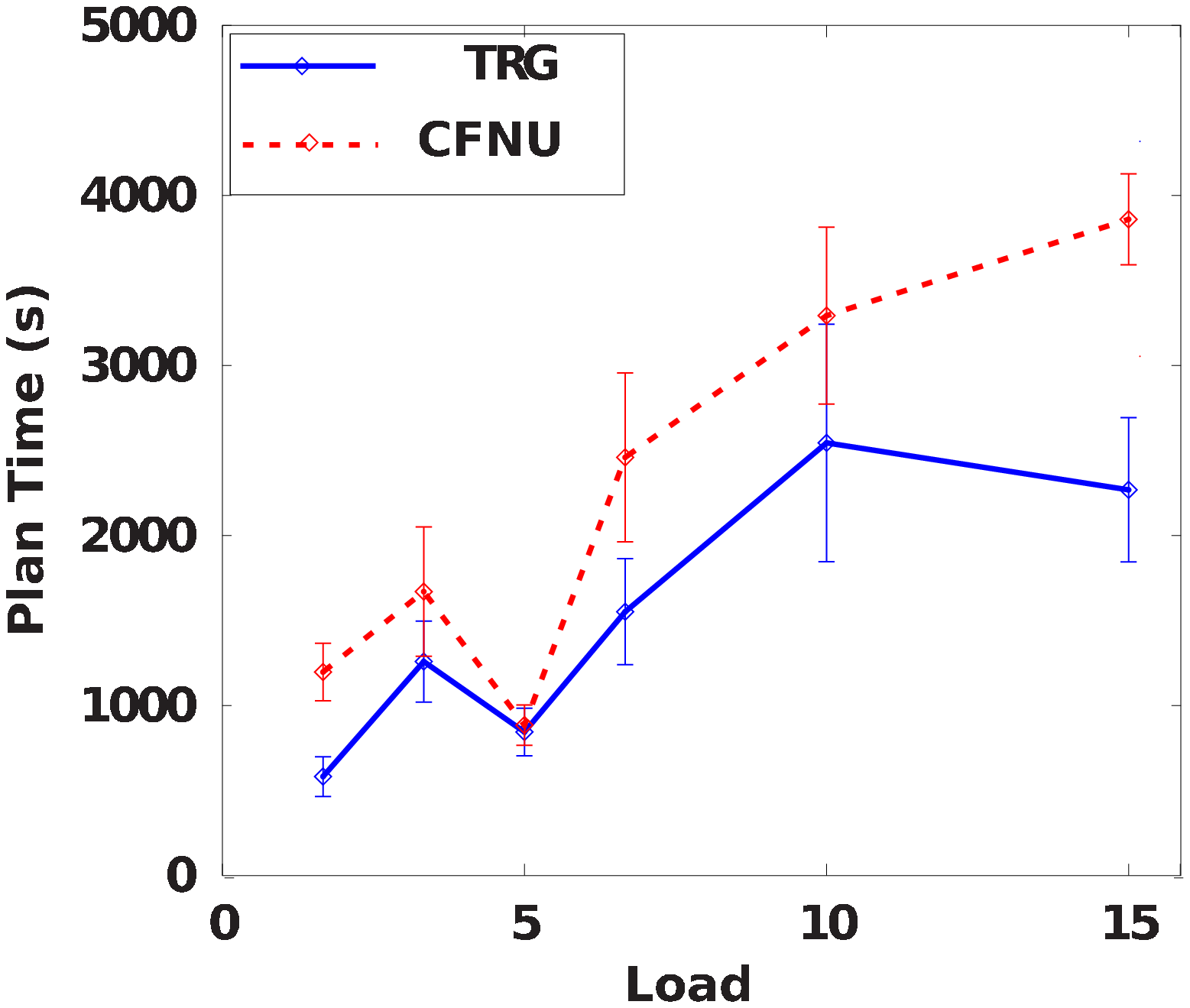} &
\includegraphics[width=0.45\linewidth]{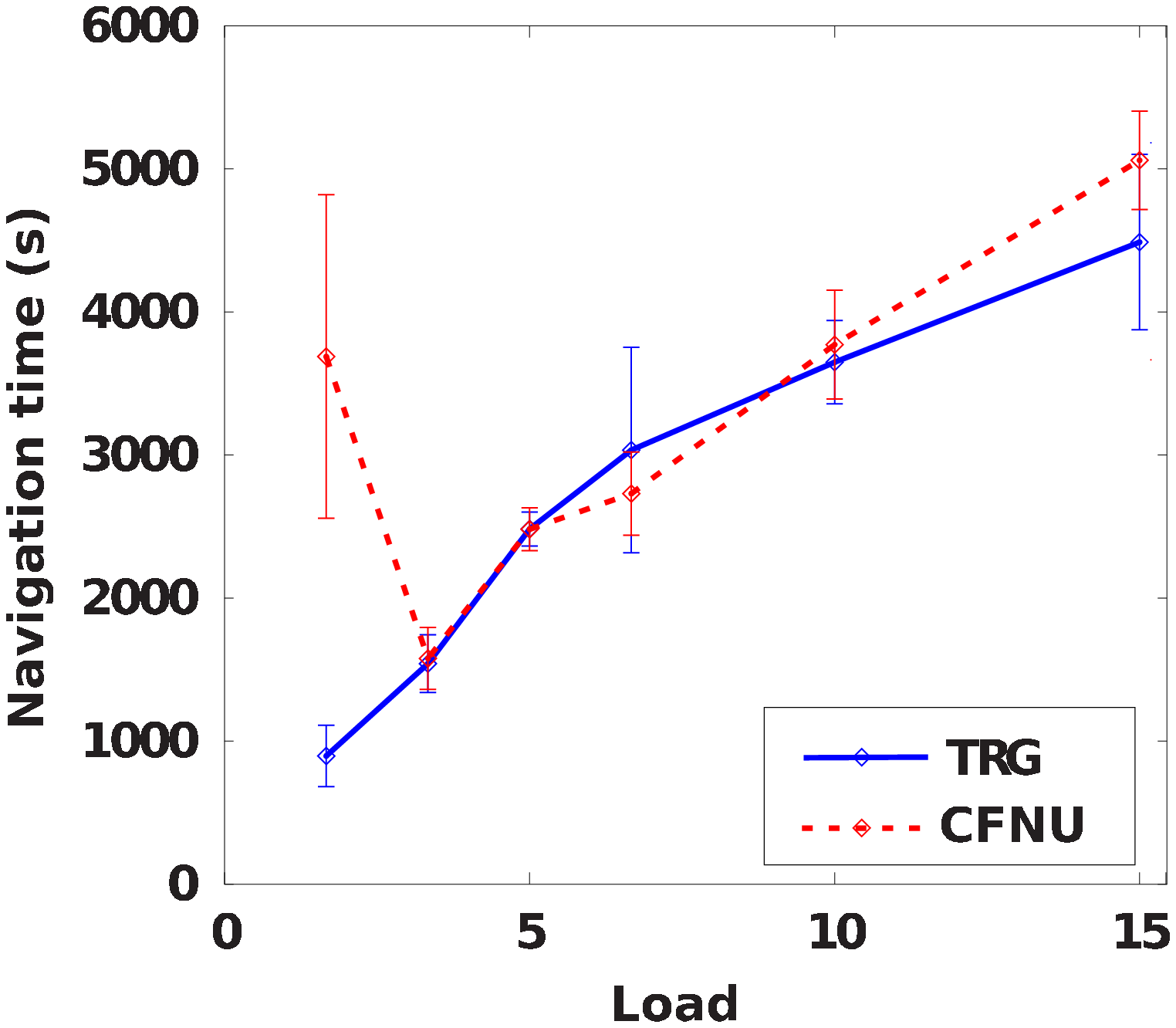} \\ 
{\small{(a)}} & {\small{(b)}} \\
\includegraphics[width=0.45\linewidth, height=1.4in]{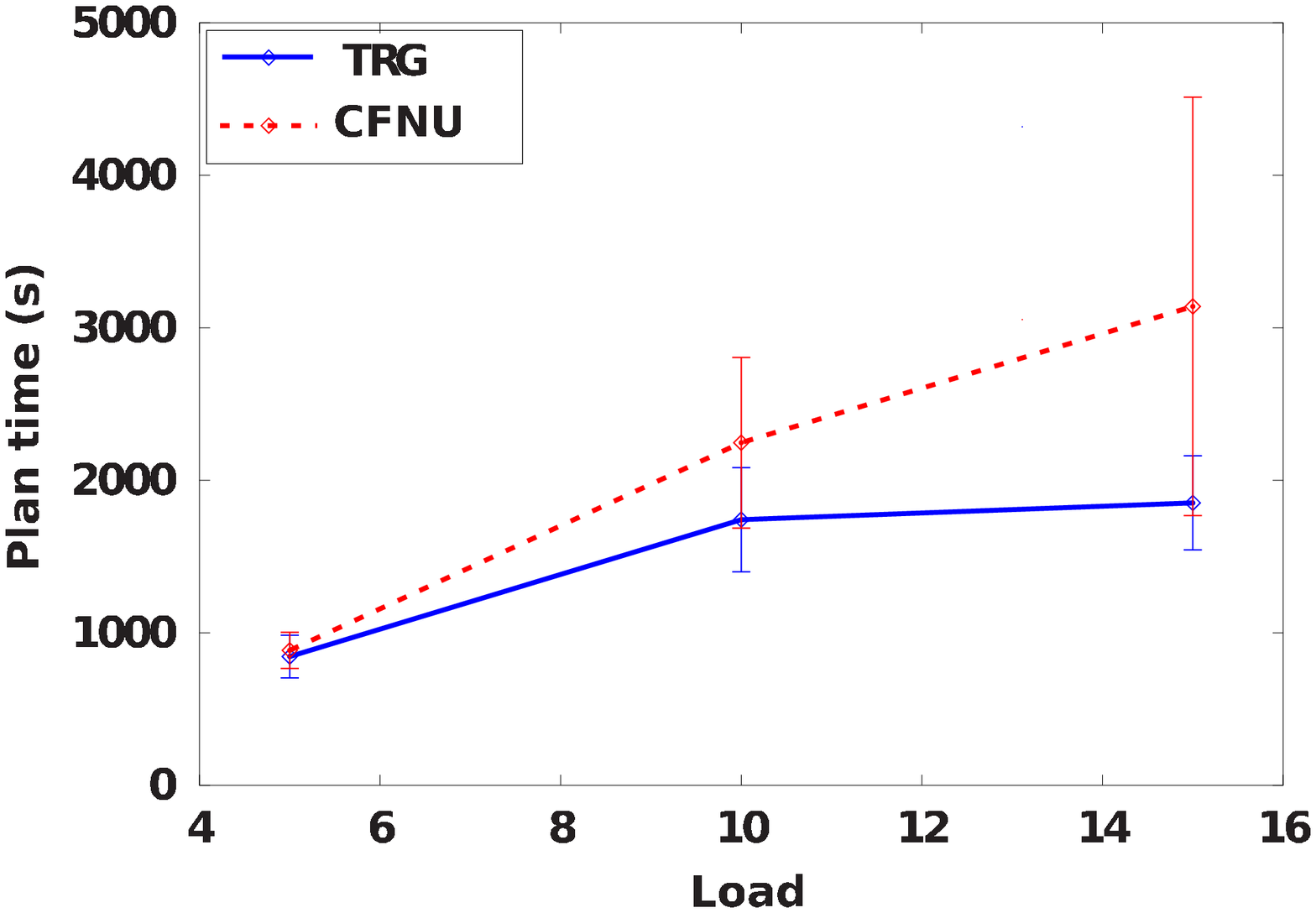} &
\includegraphics[width=0.45\linewidth, height=1.4in]{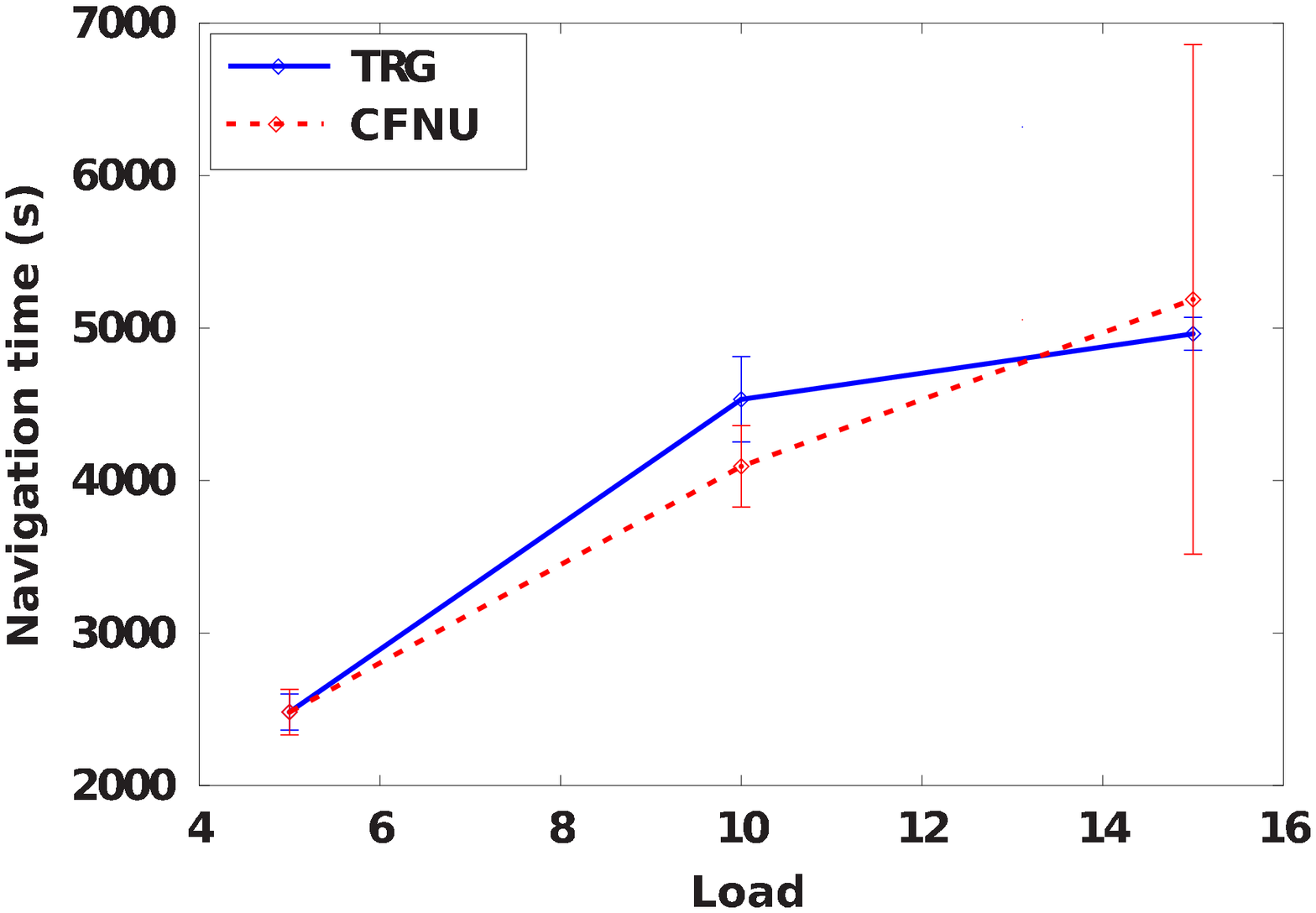} \\
{\small{(c)}} & {\small{(d)}}\\
\end{tabular}
\caption{Average times taken by TRG and CFNU approaches. Top row is for multi-robot scenarios, bottom row is for single robot scenarios.  The left column shows time for planning paths and solving the MDP for the TOP-U problem. The right column shows time taken traveling between tasks and performing collision avoidance.  Experiments performed for different values of $T$, $T$ and $Vis$. Values on the x-axis are arranged in increasing orders for average robot task load $L$.
}
\label{fig_num_time}
\end{figure}

Figures~\ref{fig_num_switches}(a) and (b) show the number of replans made by each robot, resulting in and not resulting in task switches respectively for the two algorithms for the different load values shown in Table~\ref{tab:loadMapping}. We observe that, on average, the TRG-based approach results in $40\%$ less replanning and $61\%$ less task switching than the CFNU approach. The reduced planning and task switching by the TRG-based algorithm can be attributed to its ability to reason more efficiently about task availabilities using its costs and beliefs about paths in the MDP based approach, along with real-time sensor data incorporated into its decisions using the HMM. In contrast, the CFNU approach uses only Euclidean distances to select tasks and consequently performs poorly. In Figures~\ref{fig_num_switches}(c) and (d), the number of replans resulting in and not resulting in tasks switches are shown for the single robot cases. We see that the switching replans are the same for both approaches because there are no other robots in the environment which could complete the task before the robot reaches it first. We also see that the TRG performs fewer non-switching replans.

\begin{figure}[htb]
\centering
\begin{tabular}{cc}
\includegraphics[width=0.45\linewidth]{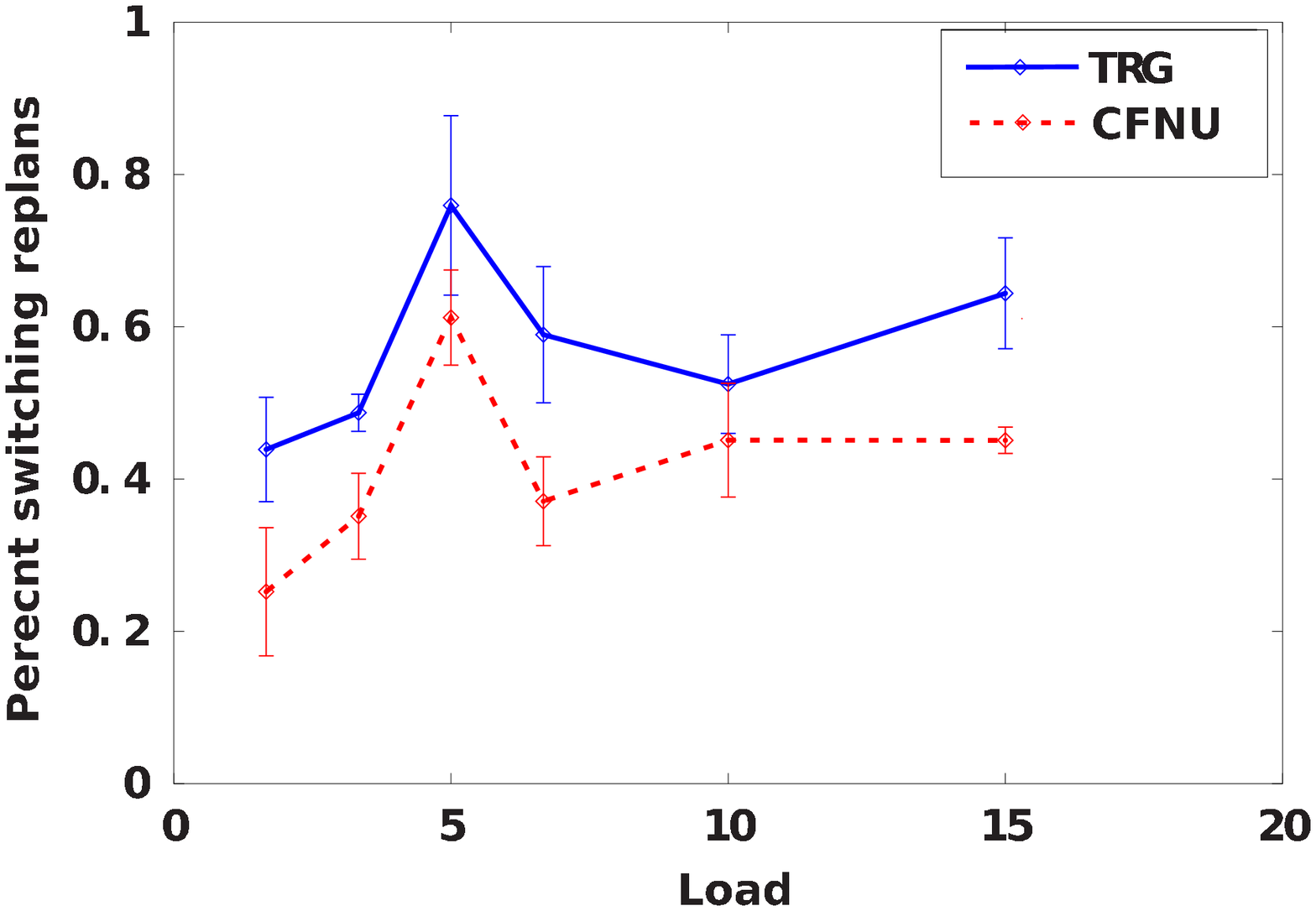} &
\includegraphics[width=0.45\linewidth]{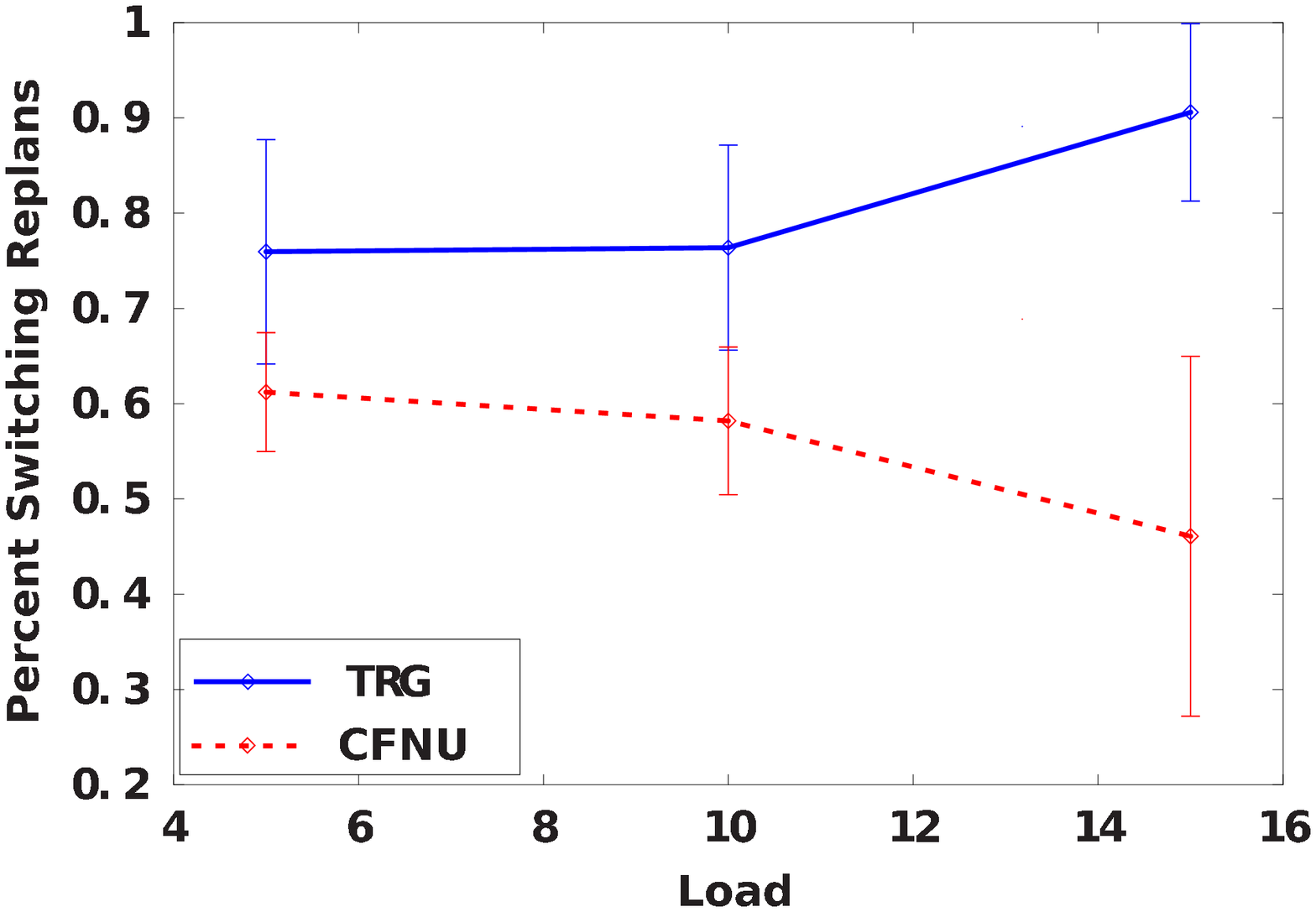} \\
(a) & (b) \\
\end{tabular}
\caption{Percent of replans resulting in a task switch (a) all cases (b) single robot case}
\label{fig:percentReplans}
\end{figure}

In Figures \ref{fig_num_time}(a) and (b), we show the average time taken for both approaches, this includes path planning and task ordering times, and, locomotion times for each robot, which includes time taken to resolve inter-robot path conflicts using Algorithm~\ref{algo_collision_avoidance}.  The TRG-based approach takes much less time for both planning and navigation compared to the CFNU approach.  In this case, the TRG-based approach requires up to $51\%$ less planning time, and, up to $75\%$ lower locomotion and coordination times than the CFNU approach. This is because the TRG-based approach accounts for both the known obstacles between tasks and the likelihood that the task will become unavailable. The CFNU approach behaves myopically and selects the closest task to visit, which could be on the other side of a large obstacle and require considerable planning and locomotion times to reach. In contrast, the TRG-based approach uses the robot’s perception of the environment to weight the path costs to tasks with the corresponding path belief to reduce the overall path costs. Note that when the number of tasks is small, or the average task load per robot is close to 1, both algorithms have comparable performance for all three metrics as each robot has to visit only one task and there is no task ordering required. Figures~\ref{fig_num_time}(c) and (d) show the average time taken by TRG and CFNU approaches in the single robot case as the average robot load increases. We can see that our TRG approach takes less planning time compared to the CFNU approach.

We also observe that as the average task load $L$ of the robots increase (from left to right on the x-axis), the distances traveled by the robots increases. The robots using the TRG-based approach travel  distances between $-16\%$ (more) to $32\%$ (less) than the CFNU approach, with an average improvement of about $6\%$ (less distance traveled) across all experiments. The TRG-based approach sometimes travels a small amount more than the CFNU approach when the robot decides to abandon its current task for another task. In that case, the CFNU approach will switch as soon as the other task becomes closer, which in some cases is the best decision, where as the TRG-based approach will continue to follow its previous task even though another task is closer. In some cases this might be the best thing to do because the closer task might be on the other side of a wall that the robot has yet to discover and actually require more distance to explore enough of the wall to realize that the task is no longer the closest; TRG performs better in such scnearios. In some of the environments with a larger load value, the TRG-based approach starts to perform better.

We also tested our proposed algorithm on physical Corobot robots.  We used the environment shown in Table~\ref{tab:hardwareResults} (left) where the white dots represent the tasks that the robot must visit.  This environment was also designed such that a CFNU closest-task-first algorithm would switch $50\%$ of the time.  We tested for a fixed value of $\Gamma =5.0$, and tested for $|R| = \{1,2\}$, $Vis = \{1,2\}$, and $|T|=5$.  Results were averaged over three runs, and are shown in Table~\ref{tab:hardwareResults} (right). In both environments, the robots were able to navigate and visit the tasks with the required number of visits.  In comparing these results to those shown in Figure~\ref{fig_num_switches}, we can see that the hardware performed fairly similar.  For example, Table~\ref{tab:hardwareResults} shows that the five tasks, one robot, one visit environment had four switching replans, which is within the margin of error for the switching replans.

\omitit{
\begin{figure}
\includegraphics[width=1.8in]{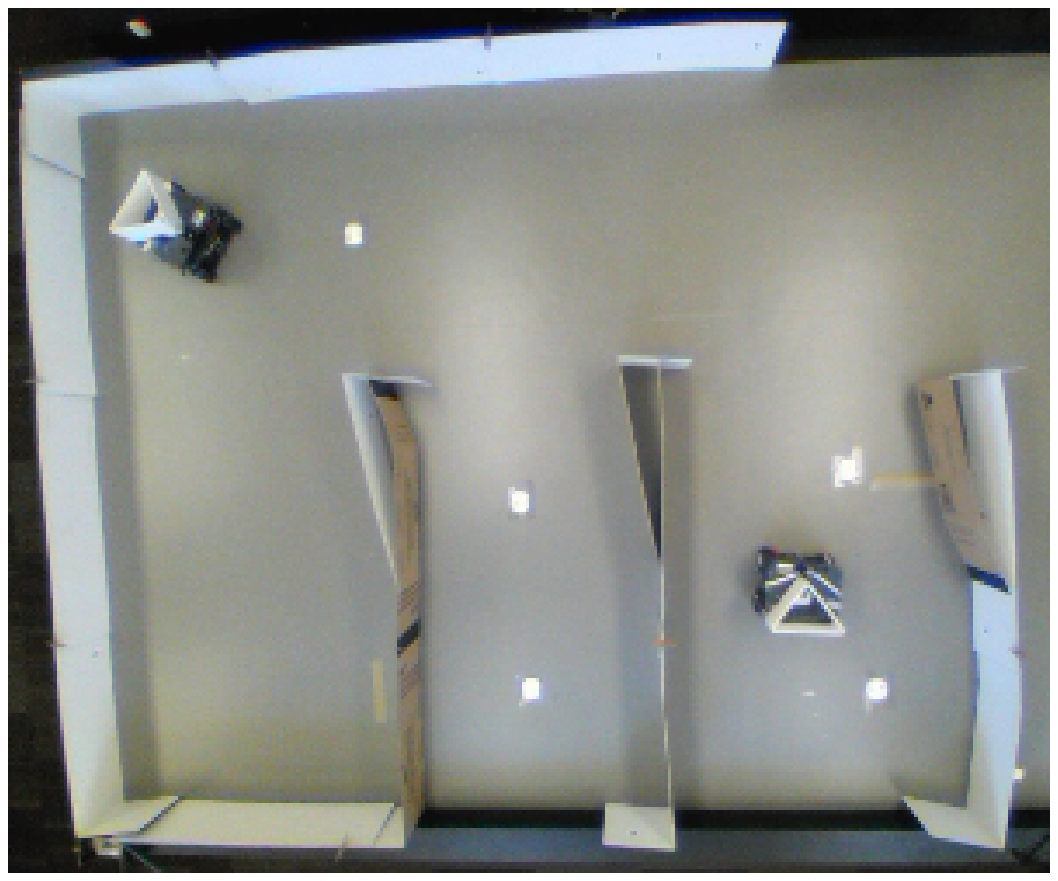} 
\caption{Overhead view of environment used for testing with $2$ Corobot robots and $5$ tasks; white dots represent task locations}
\label{fig:hardwareEnv}
\end{figure}
}

\begin{table*}
\centering
\caption{Overhead view of environment used for testing with $2$ Corobot robots and $5$ tasks; white dots represent task locations (left) and experimental results}
\label{tab:hardwareResults}
\begin{tabular}{c c c l l l l l}
\multirow{5}{*}{\hspace*{-0.1in}\includegraphics[width=1.3in, height=1.0in]{figs/hardware_env_v2.eps} }&\multicolumn{7}{c}{}\\
& \multicolumn{7}{c}{}\\
\hhline{~-------}
&\textbf{Rob.} & \textbf{Vis.} & \textbf{Dist. Traveled (cm)} & \textbf{\# Switches} & \textbf{\# Non-switches} & \textbf{Plan Time (s)} & \textbf{Navigation Time (s)} \\ \hhline{~-------}
&$1$ & $1$ & $2102 \pm (349.94)$ & $4 \pm (0)$ & $0.67 \pm (1.15)$ & $613.96 \pm (144.54)$ & $1415.96 \pm (238.87)$ \\ \hhline{~-------}
&$2$ & $1$ & $1130.45 \pm (86.92)$ & $2.83 \pm (0.58)$ & $2.5 \pm (1.32)$ & $835.6 \pm (269.11)$ & $746.86 \pm (48.66)$ \\ \hhline{~-------}
&$2$ & $2$ & $6118.95 \pm (2689.5)$ & $4.5 \pm (1.5)$ & $3.17 \pm (2.52)$ & $760 \pm (372.83)$ & $3921.43 \pm (1704.98)$ \\ \hhline{~-------}
\end{tabular}
\end{table*}

\section{Conclusions and Future work}
In this paper, we introduced the TOP-U problem where robots have to determine the order to visit a set of task locations when the path costs between a pair of tasks can vary dynamically as the robot discovers obstacles while navigating between tasks. We proposed a data structure called a task reachability graph (TRG) along with techniques
to integrate the task ordering with uncertainty in path costs and availabilities. Our analytical results show that our proposed algorithm is optimal and complete. The algorithm's performance was also  evaluated extensively through experiments and was shown to result in reduced time and fewer computations (less task switching) as compared to an algorithm that does not consider uncertainty in path costs and task availabilities. As future work, we are considering analyzing situations with stricter constraints such as a partial ordering over the task set. In the present work, multi-robot coordination is handled using a light-weight coordination mechanism where all robots, except one, stop. We are investigating techniques to integrate tighter, but fast, coordination approaches to improve the coordination of robots, such as exchanging path plans and calculating a plan in the joint configuration space, only when robots get within close proximity of each other. 

\bibliographystyle{IEEEtran}
\bibliography{refs}
\end{document}